\algnewcommand{\algorithmicand}{\textbf{ and }}
\algnewcommand{\algorithmicor}{\textbf{ or }}
\algnewcommand{\AlgAnd}{\algorithmicand}
\algnewcommand{\AlgOr}{\algorithmicor}
\Crefname{figure}{Fig.}{Figs.}
\newcolumntype{C}{>{\centering\arraybackslash}X} 
\let\oldforeign@language\foreign@language
\DeclareRobustCommand{\foreign@language}[1]{%
	\lowercase{\oldforeign@language{#1}}}
\providecommand{\algorithmname}{Algorithm}
\let\oldforeign@language\foreign@language
\DeclareRobustCommand{\foreign@language}[1]{%
	\lowercase{\oldforeign@language{#1}}}
\newtheorem{defn}{Definition}
\newtheorem{lem}{Lemma}
\newtheorem{thm}{Theorem}
\newtheorem{assum}{Assumption}
\def\ps@IEEEtitlepagestyle{%
	\def\@oddhead{\parbox[t][\height][t]{\textwidth}{\centering \scriptsize
			Personal use of this material is permitted. Permission from the author(s) and/or copyright holder(s), must be obtained for all other uses. Please contact us and provide details if you believe this document breaches copyrights.\\
			\noindent\makebox[\linewidth]{}
		}\hfil\hbox{}}%
	\def\@evenhead{\scriptsize\thepage \hfil \leftmark\mbox{}}%
	\def\@oddfoot{\parbox[t][\height][l]{\textwidth}{
			\vspace{-20pt}{\rule{\textwidth}{0.4pt}}\\ \footnotesize			{\bf{\footnotesize\textcolor{red}{A. M. Ali, C. Shen, and H. A. Hashim, "A Linear MPC with Control Barrier Functions for Differential Drive Robots," IET Control Theory \& Applications, 2024.}}}\\
			\noindent\makebox[\linewidth]
		}\hfil\hbox{}}%
	\def\@evenfoot{\MYfooter}}
\begin{document}
	\bstctlcite{IEEEexample:BSTcontrol}

\title{A Linear MPC with Control Barrier Functions for Differential Drive Robots}

\author{Ali Mohamed Ali, Chao Shen, and Hashim A. Hashim
	\thanks{This work was supported in part by the National Sciences and Engineering Research Council of Canada (NSERC), under the grants RGPIN-2022-04937, RGPIN-2022-04940, DGECR-2022-00103 and DGECR-2022-00106.}
	\thanks{A. M. Ali and H. A. Hashim are with the Department of Mechanical
		and Aerospace Engineering, Carleton University, Ottawa, ON, K1S-5B6,
		Canada (e-mail: AliMohamedAli@cmail.carleton.ca and hhashim@carleton.ca). C. Shen is with the Department of Systems and Computer Engineering, Carleton
		University, Ottawa, ON, K1S-5B6, Canada (shenchao@sce.carleton.ca).}
}



\maketitle
\begin{abstract}
The need for fully autonomous mobile robots has surged over the past decade, with the imperative of ensuring safe navigation in a dynamic setting emerging as a primary challenge impeding advancements in this domain. In this paper, a Safety Critical Model Predictive Control based on Dynamic Feedback Linearization tailored to the application of differential drive robots with two wheels is proposed to generate control signals that result in obstacle-free paths. A barrier function  introduces a safety constraint to the optimization problem of the Model Predictive Control (MPC) to prevent collisions. Due to the intrinsic nonlinearities of the differential drive robots, computational complexity while implementing a Nonlinear Model Predictive Control (NMPC) arises. To facilitate the real-time implementation of the optimization problem and to accommodate the underactuated nature of the robot, a combination of Linear Model Predictive Control (LMPC) and Dynamic Feedback Linearization (DFL) is proposed. The MPC problem is formulated on a linear equivalent model of the differential drive robot rendered by the DFL controller. The analysis of the closed-loop stability and recursive feasibility of the proposed control design is discussed. Numerical experiments illustrate the robustness and effectiveness of the proposed control synthesis in avoiding obstacles with respect to the benchmark of using Euclidean distance constraints.
\end{abstract}

\begin{IEEEkeywords}
	Model Predictive Control, MPC, Autonomous Ground Vehicles, Nonlinearity, Dynamic Feedback Linearization, Optimal Control, Differential Robots.
\end{IEEEkeywords}

\section{Introduction}\label{sec1}

\subsection{Motivation}
\IEEEPARstart{S}{afety}, stability, and optimality of control systems
are fundamental problems that have tight conflicting coupling \cite{AaronIEEE}.
The increased deployment of mobile robots in industries such as manufacturing,
healthcare, and logistics, encouraged researchers to develop robust
and comprehensive control architectures over the past decade \cite{IET-Trakinghuman}.
Starting with the early work in \cite{KanayamaICRA} where a local
Lyapunov-based control design has been proposed, followed by \cite{OrioloIEEETrans} 
a Dynamic Feedback Linearization (DFL) and thereafter backstepping
tracking control approach \cite{SaturationfeedbackIEEETrans}
for the global trajectory tracking of unicycle robots. Addressing
parametric uncertainties, sliding Mode Control (SMC) has been investigated by
targeting robot stabilization and trajectory tracking \cite{IntegralSlidingFridmanIEEETrans,ferraraIETControlTheory,hwang2013trajectoryIET,IETUnicycleFintetime,IETSliding2010}.
Prescribed transient performance and guaranteed steady error have
been addressed using prescribed control technique \cite{IETprescribedperformance2015}
where time-varying reducing boundaries have been imposed for the robot
tracking position. However, the above-listed approaches overlooked
control input and state constraints, which hinders the control practicality
and feasible implementation.

\subsection{Scope and Literature}

\hspace{0.3cm} Model predictive control (MPC), also known as receding horizon control,
is an advanced control approach that was invented for industrial process
control and gained popularity because it considers
control input and state constraints \cite{garcia1989bookmpc}. MPC
has been introduced for unicycle model path following in  \cite{bookoptimization}-\cite{MPCmobileIFAC2018}
considering the nominal model. The work in \cite{SunIEEETrans2017} modified
the MPC to address unicycle robot input constraints considering small
bounded disturbances. In \cite{MPCmobileIFAC2018}, an MPC tightly
coupled with a nonlinear disturbance observer has been designed to
estimate and compensate for external disturbances. Although the above-mentioned
papers were able to utilize MPC to successfully follow a predefined
path, they do not guarantee safe path following of mobile robots in
a typical unknown working environment. A potential conflict for satisfying
control input and state constraints with other safety criteria may
arise. In other words, not all predefined paths would be safe
\cite{Aaron2019ECC}.

\hspace{0.3cm}Safe navigation with the MPC framework is typically enforced as distance
constraints described in the form of Euclidean norms where the distance
between the navigating robot and obstacles should be larger than a
safety margin, see \cite{Ali2024ACC,YOON2009741,6728261,7489011}. The distance
constraint will not confine the optimization problem until a reachable
set along the horizon intersects with the obstacles. This way, the
robot will not take action to avoid the obstacles unless they are
nearby. One way to address this challenge is by using a larger horizon,
which could significantly increase the computational cost in real-time
implementation. As such, there is a need for an invariant safe set
that could confine the robot’s movement during the optimization at
every time step independent of whether the robot is near an obstacle
or not \cite{Aaron2017IEEETrans}.
Control barrier functions (CBF) represent a safety measure of the
system that could be utilized (e.g., Euclidean distance between mobile
robot and obstacles). In this case, the controller can be synthesized
and the CBF can be designed to guarantee stable error dynamics as
well as safe navigation. \textcolor{black}{Recently, CBF was introduced
	in \cite{Aaron2019ECC}, where the control input was the solution
	to a quadratic programming problem. The main idea of the quadratic
	programming program is to impose a minimal invasive change of the
	stabilizing controller to be also safe \cite{Ali2024ACC, Aaron2017IEEETrans}.
	However, those contributions lack the prediction capabilities
	in the formulation of the MPC formulation.} Recent efforts considered
unifying CBF with the traditional MPC formulation. A unified framework
of MPC with CBF was introduced to control the Segway model \cite{Grandia2020NonlinearMP}.
In \cite{SonCDC2019}, the authors developed nonlinear MPC based on
CBF for vehicle avoidance, however, no theoretical guarantee of closed-loop stability has been provided. MPC and CBF are organized as high-level
planner and a low-level tracker in \cite{rosolia2020multi}, and sufficient
conditions which guarantee recursive constraint satisfaction for the
closed-loop system were provided.

\hspace{0.3cm}It has been well-recognized that real-time implementation
of nonlinear MPC is subject to computational complexity when compared
to Linear MPC (LMPC) schemes. Applying MPC on a linear dynamical model with quadratic cost, functions renders the optimization problem to
be a quadratic program which usually leads to a fast online solution.
To implement the nonlinear MPC in real time, at each sampling instant,
a nonlinear open-loop optimal control problem has to be solved within
strict time constraints. Violation of the time constraints could 
degrade the output performance and/or stability measures \cite{CompdelayMPC2004IFAC}
and thereby fast nonlinear MPC techniques are studied by many researchers for instance see \cite{ZanonFastsolver2015, ChaoFastSolver2017IEEETran, OHTSUKA2004Automatica}. Several robotic applications consider feedback linearization techniques. The
work in \cite{CHARLET1989143} introduced DFL for the Multi-Input
Multi-Output (MIMO) system, where the system can be modified to an
equivalence linear model consisting of a chain of integrators under
a feedback controller and a proper change of coordinates. A linear
equivalence model has been derived for unicycle robots \cite{OrioloIEEETrans}
and car-like robots \cite{Grandia2020NonlinearMP}. The linear equivalence
model unlocks the possibility of applying MPC on a linear model instead
of a nonlinear one. In \cite{KONG2023126658}, a coupled MPC with
an Input/Output Feedback Linearization (IOFL) approach illustrated
improvement in the thermal power plant economic and dynamic output
performance with fast real-time implementation. MPC cascaded with
Feedback linearization has been developed for the fully-actuated spacecraft
attitude model \cite{IET2023MPCFL}.
To the best of the authors’ knowledge, the cascaded scheme of MPC and feedback
linearization with obstacle avoidance tailored to the application of differential
drive robots has not yet been addressed.

\subsection{\textcolor{black}{Contributions and Structure}}

\hspace{0.3cm}
The proposed solution minimizes the above-identified literature gaps by formulating MPC using CBF to control the nonlinear model of two-wheeled differential robots. The safety constraint gets activated everywhere not only when the vehicle is near an obstacle as in the case of the usage of Euclidean norms as a constraint. The CBF will provide the notion of the global forward invariance of the safe set, in other words, the robot will avoid the obstacle even if it is far from it leading to a shorter prediction horizon. Unlike the majority of the existing mobile robots collision avoidance literature, our solution considers the full nonlinear underactuated model. To address the system nonlinearities and the computational complexity arising from the nonlinear MPC, we introduce a solution that integrates a cascaded scheme of DFL with MPC unlocking the benefits of linear MPC as opposed to nonlinear MPC. The contributions of this work can
be summarized as follows:
\begin{enumerate}
	\item A cascaded scheme of DFL and
	MPC is proposed to address the nonlinear MPC computational complexity
	due to the intrinsic nonlinearity of the robot.
	\item The mapping between the original nonlinear underactuated
	model and the linear equivalent model (the MPC-CBF formulation is
	designed on the linear equivalent model rendered by the DFL) is presented.
	Combining MPC-CBF and DFL into a single scheme allows us to convert
	the obstacle avoidance of the full model of Unicycle to a Quadratic
	Constraint Quadratic Programming (QCQP) Problem that can be solved efficiently by off-the-shelf solvers.
	\item Closed loop stability, recursive  feasibility  and computational complexity of the proposed scheme are analyzed, and
	numerical simulations for a standard safe navigation task of a two-wheeled differential drive robot are carried out demonstrating the effectiveness of the proposed scheme.
\end{enumerate} 
The remaining part of the paper is organized as
follows. Section \ref{sec:Problem-Formulation} presents preliminaries,
mathematical notation, and problem formulation. A brief description
of the linear equivalence model is discussed in Section \ref{sec:Sec2_Linear-Equivalence}.
Section \ref{sec:Sec5_Proposed-Scheme} demonstrates the proposed
control scheme. Section \ref{sec:Sec6_Results} illustrates the effectiveness
of the proposed scheme through numerical simulations. Finally, Section
\ref{sec:Sec7_Conclusion} concludes the work.

\section{Problem Formulation}\label{sec:Problem-Formulation}

\subsection{Preliminaries}\label{sec:Preliminaries}

\hspace{0.3cm} In this paper, $\mathbb{R}$ denotes the set of real numbers, $n$ describes the degree of freedom,
and $\mathcal{L}$ defines the Lie derivative operator. For a given
vector field $f(x)$ such that $f:\mathbb{R}^{n}\rightarrow\mathbb{R}^{n}$
and a scalar function $\lambda:\mathbb{R}^{n}\rightarrow\mathbb{R}$,
the Lie derivative of $\lambda$ with respect to $f$ can be written
as $\mathcal{L}_{f}\lambda=\frac{\partial\lambda}{\partial x}\cdotp f(x).$
Consider the following single input single output nonlinear affine
in control system:

\begin{equation}
	\begin{cases}
		\dot{x} & =f(x)+g(x)u\\
		y & =h(x)
	\end{cases}\label{eq:pr1}
\end{equation}
where $x\in\mathbb{R}^{n}$ describes the system states, $u\in\mathbb{R}$
defines the system control input, $y\in\mathbb{R}$ denotes the system
output, $f:\mathbb{R}^{n}\rightarrow\mathbb{R}^{n}$, $g:\mathbb{R}^{n}\rightarrow\mathbb{R}^{n}$,
and $h:\mathbb{R}^{n}\rightarrow\mathbb{R}$. The relative degree
$r$ of such system can be defined at point $x_{0}$ if $\mathcal{L}_{g}\mathcal{L}_{f}^{\rho}h(x)=0$
for all $x$ in the neighborhood of $x_{0}$, $\rho<r-1$, and $\mathcal{L}_{g}\mathcal{L}_{f}^{r-1}h(x_{0})\neq0.$
The Multi Input Multi Output (MIMO) square affine in control system
is expressed as follows:

\begin{equation}
	\begin{cases}
		\dot{x} & =f(x)+g_{1}(x)u_{1}+\cdots+g_{m}(x)u_{m}\\
		y_{1} & =h_{1}(x)\\
		\vdots & =\vdots\\
		y_{m} & =h_{m}(x)
	\end{cases}\label{eq:pr2}
\end{equation}

\begin{lem}
	\label{lem:degree}\cite{isidori1985nonlinear} The relative degree
	of \eqref{eq:pr2} at $x_{0}$ is describes as $r=[r_{1},\ldots,r_{m}]^{\top}\in\mathbb{R}^{m}$
	such that $r$ exists if the following holds:
\end{lem}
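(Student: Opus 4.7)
The plan is to extend the scalar relative-degree characterization already recalled for the SISO system \eqref{eq:pr1} to each of the $m$ outputs of \eqref{eq:pr2}, and then to collect the resulting $m$ scalar conditions into a single matrix-level statement about the so-called decoupling matrix. First, for a fixed index $i \in \{1, \ldots, m\}$, I would treat $y_i = h_i(x)$ as a scalar output and differentiate it repeatedly along the trajectories of \eqref{eq:pr2}. Since the system is affine in the controls, the Lie-derivative chain rule yields
\begin{equation*}
y_i^{(k)} = \mathcal{L}_f^{k} h_i(x) + \sum_{j=1}^{m} \mathcal{L}_{g_j}\mathcal{L}_f^{k-1} h_i(x)\, u_j
\end{equation*}
for as long as no control has yet entered a lower derivative. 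This is precisely the SISO argument applied output by output, so the smallest integer $r_i$ for which some control appears in $y_i^{(r_i)}$ is characterized by $\mathcal{L}_{g_j}\mathcal{L}_f^{k} h_i(x) = 0$ for all $j$ and all $k < r_i - 1$ in a neighborhood of $x_0$, together with $\mathcal{L}_{g_j}\mathcal{L}_f^{r_i-1} h_i(x_0) \neq 0$ for at least one $j$.

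Next, I would assemble the $m$ scalar expressions into a single stacked identity
\begin{equation*}
\begin{bmatrix} y_1^{(r_1)} \\ \vdots \\ y_m^{(r_m)} \end{bmatrix} = b(x) + A(x)\, u,
\end{equation*}
where $b_i(x) = \mathcal{L}_f^{r_i} h_i(x)$ and the $(i,j)$-entry of the decoupling matrix $A(x)$ is $\mathcal{L}_{g_j}\mathcal{L}_f^{r_i-1} h_i(x)$. The vector $r = [r_1, \ldots, r_m]^{\top}$ then qualifies as a well-defined relative degree at $x_0$ exactly when two conditions hold jointly: the mixed Lie-derivative vanishing $\mathcal{L}_{g_j}\mathcal{L}_f^{k} h_i(x) = 0$ for all $i,j$, all $k < r_i - 1$, and all $x$ in a neighborhood of $x_0$; and the nonsingularity of $A(x_0)$, which guarantees that the control $u$ can be locally recovered from the highest available output derivatives, and which in turn underpins the dynamic feedback linearization construction used later in the paper.

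The delicate step is justifying why a full-rank requirement on $A(x_0)$ is the right global algebraic condition, rather than merely asking that each row of $A(x_0)$ be nonzero. The standard Isidori-style argument \cite{isidori1985nonlinear} proceeds by contradiction: if $A(x_0)$ were singular, some nontrivial linear combination of the $y_i^{(r_i)}$ would be independent of all controls, meaning the chosen $r_i$ for at least one output could be increased, violating its minimality. This is the main obstacle, since it requires reasoning about all $m$ outputs jointly rather than one at a time. Once it is settled, the rest of the lemma is a direct transcription of the scalar characterization to the MIMO setting, and I would simply appeal to the detailed derivation in Isidori's monograph rather than reproduce the routine algebra here.
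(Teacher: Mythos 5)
First, note that the paper does not prove this statement at all: Lemma~\ref{lem:degree} is quoted verbatim from \cite{isidori1985nonlinear} and is really the \emph{definition} of the vector relative degree for a square MIMO affine system, not a theorem with a derivation. Your output-by-output differentiation, the stacked identity $[y_1^{(r_1)},\ldots,y_m^{(r_m)}]^{\top}=b(x)+A(x)u$, and the identification of the $(i,j)$ entry of $A(x)$ with $\mathcal{L}_{g_j}\mathcal{L}_f^{r_i-1}h_i(x)$ are all faithful to the standard Isidori development, and to that extent the proposal is a reasonable reconstruction of what the citation contains.

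However, your ``delicate step'' contains a genuine error. You argue by contradiction that if $A(x_0)$ were singular, then some $r_i$ could be increased, ``violating its minimality.'' That implication is false, and the paper itself supplies the counterexample: for the unicycle in \eqref{eq:mobile_8-1}, each output has $\mathcal{L}_{g_1}h_i\neq 0$ generically, so each row-wise index $r_i=1$ is genuinely minimal and cannot be increased, yet $A(x)=\bigl[\begin{smallmatrix}\cos(x_3) & 0\\ \sin(x_3) & 0\end{smallmatrix}\bigr]$ is identically singular. Singularity of $A(x_0)$ does not contradict minimality of any individual $r_i$; it simply means the vector relative degree fails to exist (the decoupling/feedback-linearizing control of Lemma~\ref{lem:Full} cannot be solved for $u$), which is precisely why the paper must resort to dynamic extension in Section~\ref{sec:Sec2_Linear-Equivalence}. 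The nonsingularity of $A(x_0)$ is therefore an independent definitional requirement motivated by invertibility of the input--output map at the highest derivatives, not a consequence of the row-wise vanishing conditions, and your attempted derivation of it should be removed rather than repaired.
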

\begin{itemize}
	\item $\mathcal{L}_{g_{j}}\mathcal{L}_{f}^{\rho}h_{i}(x)=0$ at the neighborhood
	of $x_{0}$ for all $1\leq j\leq m$, $\rho<r_{i}-1$, and $1\leq i\leq m$.
	\item The decoupling system input matrix  $A(x)\in\mathbb{R}^{m\times m}$ defined as 
\end{itemize}
\begin{equation}
	A(x)=\left(\begin{array}{ccc}
		\mathcal{L}_{g_{1}}\mathcal{L}_{f}^{r_{1}-1}h_{1}(x) & \cdots & \mathcal{L}_{g_{m}}\mathcal{L}_{f}^{r_{1}-1}h_{1}(x)\\
		\mathcal{L}_{g_{1}}\mathcal{L}_{f}^{r_{2}-1}h_{2}(x) & \cdots & \mathcal{L}_{g_{m}}\mathcal{L}_{f}^{r_{1}-1}h_{2}(x)\\
		\vdots & \cdots & \vdots\\
		\mathcal{L}_{g_{1}}\mathcal{L}_{f}^{r_{m}-1}h_{m}(x) & \cdots & \mathcal{L}_{g_{m}}\mathcal{L}_{f}^{r_{m}-1}h_{m}(x)
	\end{array}\right)\label{eq:decouplingmatrix-1}
\end{equation}

is nonsingular at $x=x_{0}$.
\begin{lem}
	\label{lem:Full}\cite{isidori1985nonlinear} The input-to-state feedback
	linearization of the system dynamics in \eqref{eq:pr2} is solvable
	at $x_{0}$ using the control input $u=A^{-1}(x)\left(\left[\begin{array}{c}
		\mathcal{L}_{f}^{r_{1}}h_{1}(x)\\
		\vdots\\
		\mathcal{L}_{f}^{r_{m}}h_{m}(x)
	\end{array}\right]+\left[\begin{array}{c}
		v_{1}\\
		\vdots\\
		v_{m}
	\end{array}\right]\right)$,where $v$ is an external reference input to be defined, if $\sum_{r=1}^{m}r=n$
	and the decoupling input matrix $A(x)$ in \eqref{eq:decouplingmatrix-1}
	is full rank.
\end{lem}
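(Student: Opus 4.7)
The proof strategy is to differentiate each output $y_i = h_i(x)$ repeatedly along trajectories of \eqref{eq:pr2} until the control first appears, then invert the resulting algebraic equation using the decoupling matrix $A(x)$ to expose a decoupled chain of integrators. The hypothesis $\sum_{i=1}^{m} r_i = n$ then guarantees that the collection of iterated Lie derivatives has enough functionally independent components to serve as a new set of state coordinates, so no zero dynamics are left over.

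First, I would invoke Lemma \ref{lem:degree}. For each $i$ the identity $\mathcal{L}_{g_j}\mathcal{L}_f^{\rho} h_i(x) \equiv 0$ for $\rho < r_i - 1$ implies inductively that $y_i^{(k)} = \mathcal{L}_f^{k} h_i(x)$ is control-free for $k = 0, \ldots, r_i-1$. One further differentiation introduces the controls through the $i$-th row of $A(x)$:
\begin{equation*}
y_i^{(r_i)} \;=\; \mathcal{L}_f^{r_i} h_i(x) \;+\; \sum_{j=1}^{m} \mathcal{L}_{g_j}\mathcal{L}_f^{r_i-1} h_i(x)\, u_j .
\end{equation*}
Collecting the $m$ equations gives the compact form $y^{(r)} = b(x) + A(x)\,u$ with $b(x)$ the column of drift terms $\mathcal{L}_f^{r_i} h_i(x)$. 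Since $A(x_0)$ is nonsingular by hypothesis, continuity makes $A(x)$ invertible on a neighborhood of $x_0$, so the feedback law stated in the lemma is well defined there; substituting it cancels the drift and leaves the decoupled chains $y_i^{(r_i)} = v_i$.

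Next, I would build the local change of coordinates $\xi = \Phi(x)$ whose $i$-th block is $\xi_i = \bigl(h_i,\, \mathcal{L}_f h_i,\, \ldots,\, \mathcal{L}_f^{r_i-1} h_i\bigr)^{\top}$, giving a vector with $\sum_i r_i = n$ components. Showing that $\Phi$ is a local diffeomorphism at $x_0$ reduces to checking that the differentials $\{\, d(\mathcal{L}_f^{k} h_i)\,:\, 1\le i\le m,\ 0\le k\le r_i-1\,\}$ are linearly independent at $x_0$. This is the classical Isidori argument from \cite{isidori1985nonlinear}: any nontrivial linear relation among these covectors, paired successively with the $g_j$'s, would contradict either the vanishing conditions $\mathcal{L}_{g_j}\mathcal{L}_f^{\rho} h_i \equiv 0$ for $\rho < r_i - 1$ or the full-rank assumption on $A(x_0)$. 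With $\Phi$ a diffeomorphism, the closed-loop dynamics in $\xi$-coordinates collapse to $m$ Brunovsky integrator chains $\dot{\xi}_i^{k} = \xi_i^{k+1}$ and $\dot{\xi}_i^{r_i} = v_i$, which is an exact linear equivalent of \eqref{eq:pr2}.

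The main obstacle is precisely this Jacobian invertibility step: one must translate the Lie-algebraic hypotheses in Lemma \ref{lem:degree} into a pointwise rank statement for $\partial \Phi/\partial x$ at $x_0$. Once that rank claim is in hand, the remainder of the argument is routine substitution plus the inverse function theorem, and the explicit feedback formula yields input-to-state linearization throughout the open set on which $A(x)$ remains invertible.
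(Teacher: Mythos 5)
The paper does not prove this lemma at all: it is stated as a cited result from \cite{isidori1985nonlinear}, so there is no in-paper argument to compare against. Your proposal is the standard textbook proof of that result, and its structure is sound: differentiate each output $r_i$ times using the vanishing conditions $\mathcal{L}_{g_j}\mathcal{L}_f^{\rho}h_i\equiv 0$ for $\rho<r_i-1$ to get $y^{(r)}=b(x)+A(x)u$, invert $A(x)$ locally by continuity of the nonsingularity at $x_0$, and use $\sum_i r_i = n$ together with the linear independence of the differentials $d(\mathcal{L}_f^{k}h_i)$ to make $\Phi$ a local diffeomorphism onto Brunovsky chains. You correctly isolate the only nontrivial step (the rank of $\partial\Phi/\partial x$ at $x_0$) and point to the right mechanism for it.

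One internal inconsistency worth fixing: you restate the feedback exactly as in the lemma, $u=A^{-1}(x)\bigl(b(x)+v\bigr)$ with $b_i(x)=\mathcal{L}_f^{r_i}h_i(x)$, and then assert that substituting it ``cancels the drift.'' It does not: substitution gives $y^{(r)}=b(x)+A(x)A^{-1}(x)\bigl(b(x)+v\bigr)=2b(x)+v$. The drift cancels only with the conventional sign $u=A^{-1}(x)\bigl(-b(x)+v\bigr)$. This is arguably a typo inherited from the paper's statement (and it is immaterial in the paper's application, where $\mathcal{L}_{\bar f}^{2}h_1=\mathcal{L}_{\bar f}^{2}h_2=0$), but a proof cannot simultaneously adopt the $+$ sign and claim exact cancellation; you should either correct the sign or note explicitly that the formula as printed requires it.
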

\hspace{-0.5cm} Throughout this paper  $\mathcal{H}(x)$
represents control barrier function with $\mathcal{H}(x):D\subset\mathbb{R}^{n}\rightarrow\mathbb{R}$
describing a safety metric. The class of $\mathcal{K_{\infty}}$ extended
function is donated by $\aleph$ such that it is a continuous function
with the mapping $[0,\infty)\rightarrow[0,\infty)$ and $lim_{r\rightarrow\infty}\aleph(r)=\infty$.
\begin{defn}
	Let $C=\{x\in D\subset\mathbb{R}^{n}:\mathcal{H}(x)\geq0\}$ be the
	Safe set where $Int(C)=\{x\in D\subset\mathbb{R}^{n}:\mathcal{H}(x)>0\}$
	describes the interior of the set and $\partial C=\{x\in D\subset\mathbb{R}^{n}:\mathcal{H}(x)=0\}$
	refers to the set boundary.
\end{defn}

\subsection{Model Dynamics}

\hspace{0.3cm} The unicycle dynamic model represents the class of differential wheeled
robots with two wheels that have been used extensively in many applications
for their low cost and simplicity. Differential robots with two wheels
are a class of mobile robots whose movement is based on two separately
driven wheels placed on either side of the robot body and an optional
but recommended a caster wheel to prevent the vehicle from tilting.
Assuming that the two wheels can only perform rolling, the kinematic
model can be described by the following set of nonlinear differential
equations \cite{de2002control}.

\begin{alignat}{1}
	\dot{x} & =\frac{r}{2}(\omega_{r}+\omega_{l})\cos(\theta).\label{eq:001}\\
	\dot{y} & =\frac{r}{2}(\omega_{r}+\omega_{l})\sin(\theta).\label{eq:002}\\
	\dot{\theta} & =\frac{r}{d}(\omega_{r}-\omega_{l}).\label{eq:003}
\end{alignat}
where (visit Fig. \ref{fig:figure1-1}.(b)): $r\in\mathbb{R}$ is the
radius of the wheels, $d\in\mathbb{R}$ is the wheel axis length,
$\{\omega_{r},\omega_{l}\}\in\mathbb{R}$ are the right and the left angular
velocities respectively, $x$, $y$ and $\theta$ are the center of
mass of the robot and its orientation respectively. Assuming rolling
without slipping for both wheels, the nonholonomic constraint can
be considered as follows \cite{choset2005principles}:

\begin{equation}
	\dot{x}\sin\theta=\dot{y}\cos\theta\label{eq:004}
\end{equation}
The nonholonomic constraint in \eqref{eq:004} introduces kinematic
constraints such that the robot cannot reach by suitable maneuvers
any desired value of $x$, $y$, and $\theta$. This is intuitive since
the robot cannot move directly to the left or right without rotating.
For more information regarding the nonholonomic constraints of the
model (visit \cite{de2002control}). 
\subsubsection{Unicycle Dynamic Model}

Recall the differential wheeled robots with two wheels dynamic model in \eqref{eq:001}-\eqref{eq:003}, it can be converted to an equivalent unicycle dynamic model by using
the input transformation $T$ as follows:

\begin{equation}
	\left[\begin{array}{c}
		u_{1}\\
		u_{2}
	\end{array}\right]=\underbrace{\left[\begin{array}{cc}
			\frac{r}{2} & \frac{r}{2}\\
			\frac{r}{d} & -\frac{r}{d}
		\end{array}\right]}_{T}\left[\begin{array}{c}
		\omega_{r}\\
		\omega_{l}
	\end{array}\right].\label{eq:005}
\end{equation}
where $u_{1}\in\mathbb{R}$ is the linear velocity of the robot and
$u_{2}=\dot{\theta}$ is the angular velocity of the robot. The position
and orientation of the Unicycle can be defined relative to the body
frame attached to its center of gravity and the global inertial frame
as shown in Fig. \ref{fig:figure1-1}. (a). The unicycle equivalent dynamic
model can be written as follows:

\begin{equation}
	\dot{x}=\left[\begin{array}{c}
		u_{1}\cos(x_{3})\\
		u_{1}\sin(x_{3})\\
		u_{2}
	\end{array}\right],\hspace{1em}\begin{array}{c}
		y_{1}=x_{1}\\
		y_{2}=x_{2}
	\end{array}\label{eq:nonlinearmodel-1}
\end{equation}
such that $(x_{1},x_{2},x_{3})$ denotes the unicycle robot generalized
coordinates, $x_{1}$ and $x_{2}$ defined the robot's position (x-y
coordinates), and $x_{3}$ refers to angle between the robot and the
x axis reference-frame. The system has three states ($n=3$) with
the configuration space $\mathcal{Q}\in\mathbb{R}^{2}\times SO(1)$
such that $SO(1)$ refers to the Special Orthogonal Group of order
1 (for more information see \cite{hashim2019special}).

\begin{figure}[!htbh]
	\includegraphics[width=8cm,height=6cm,keepaspectratio]{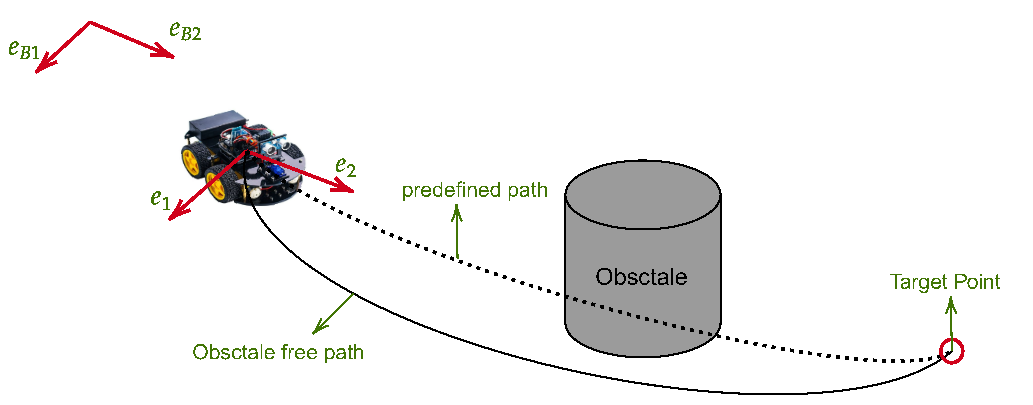}
	
	\centering (a)
	
	\includegraphics[width=10cm,height=8cm,keepaspectratio]{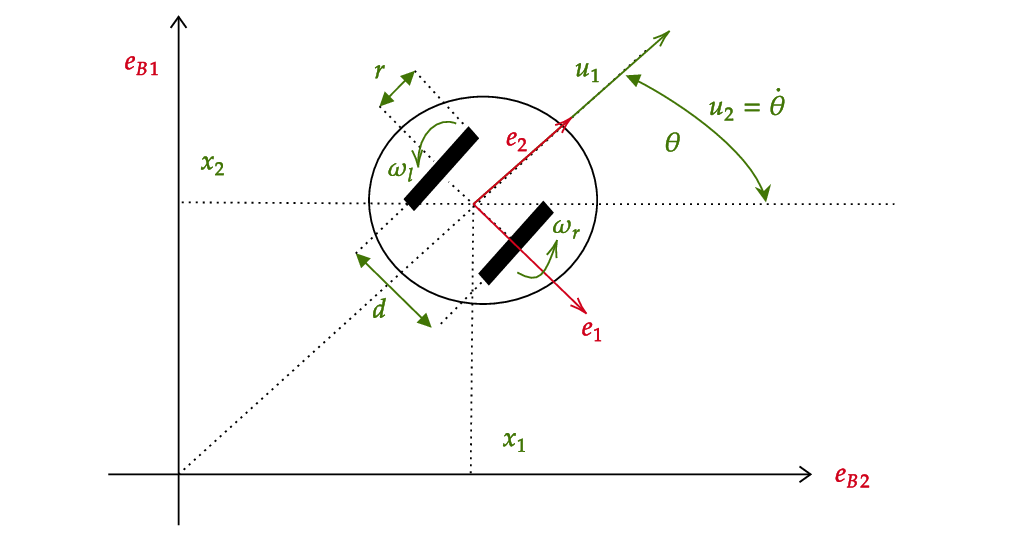}
	
	\centering (b)
	
	\caption{(a) Differential drive robots safe navigation task where $\{e_{1},e_{2}\}$ is the
		body fixed frame and $\{e_{B1},e_{B2}\}$ is the global frame. (b)
		differential drive robots model coordinates.}
	\label{fig:figure1-1}
\end{figure}
\hspace{-0.5cm} In view of \cite{minh2022safetycritical}, the authors proposed the
nonlinear MPC with CBF safety enforcement described as follows:
\begin{equation}
	\text{minimize}_{x_{k}(.),u_{k}(.)}\sum_{k=0}^{N-1}l_{k}(x_{k},u_{k})+V_{N}(x_{N})\label{eq:pr3}
\end{equation}
subject to
\begin{gather}
	x_{k+1}=f(x_{k},u_{k})\hspace{1cm}k=0,1,\ldots,N-1\label{eq:pr4}\\
	x_{min}\leq x_{k}\leq x_{max}\hspace{1cm}k=0,1,\ldots,N-1\label{eq:pr5}\\
	u_{min}\leq u_{k}\leq u_{max}\hspace{1cm}k=0,1,\ldots,N-1\label{eq:pr6}\\
	\triangle\mathcal{H}(x_{k+1},u_{k+1})\geq\gamma\mathcal{H}(x_{k})\hspace{1cm}k=0,1,\ldots,N-1\label{eq:pr7}\\
	x_{N}\in\mathcal{X}_{f}\label{eq:pr8}
\end{gather}
where $l_{k}(x_{k},u_{k})$ denotes a cost function to be minimized
such that $l_{k}(x_{k},u_{k})=x_{k}^{\top}Qx_{k}+u_{k}^{\top}Ru_{k}^{\top}$with
$Q=diag(q_{1},q_{2},q_{3})$ and $R=diag(r_{1},r_{2})$  are positive
definite symmetric weight matrices and $V_{N}(x_{N})$ denotes a terminal
cost function such that $V_{N}(x_{N})=x_{N}^{\top}Px_{N}$ with $P=diag(p_{1},p_{2},p_{3})$
is a positive definite symmetric weight matrix. The nonlinear dynamics,
state, and control input constraints, and terminal constraints are described
in \eqref{eq:pr4}-\eqref{eq:pr8}. Given the inherent computational demands associated with nonlinear (MPC), we show in the subsequent section how to substitute the nonlinear constraints in \eqref{eq:pr4} with a cascaded scheme with a linear equivalence model.

\section{Linear Equivalence Model}\label{sec:Sec2_Linear-Equivalence}

\hspace{0.3cm} In this section, the linear equivalence model of the unicycle in \eqref{eq:nonlinearmodel-1} is discussed.
In view of \eqref{lem:Full}, the solvability of the full input to state
feedback linearization of the unicycle model will render the input-output
relation to be linear. The goal of this section is to find the feedback
linearizing controller in the form of $u=\alpha(x)+\beta(x)v$, where
$\alpha(x)$ and $\beta(x)$ can be written as follows:
\begin{equation}
	\alpha(x)=A^{-1}(x)\left[\begin{array}{c}
		\mathcal{L}_{f}^{r_{1}}h_{1}(x)\\
		\vdots\\
		\mathcal{L}_{f}^{r_{m}}h_{m}(x)
	\end{array}\right],\hspace{0.5cm}\beta(x)=A^{-1}(x).\label{eq:200}
\end{equation}

\subsection{Feedback Linearization}

\hspace{0.3cm} The unicycle model in \eqref{eq:nonlinearmodel-1} can be represented as MIMO affine in control system 
as follows:
\begin{equation}
	f(x)=\left[\begin{array}{c}
		0\\
		0\\
		0
	\end{array}\right],g_{1}(x)=\left[\begin{array}{c}
		\cos(x_{3})\\
		\sin(x_{3})\\
		0
	\end{array}\right],g_{2}(x)=\left[\begin{array}{c}
		0\\
		0\\
		1
	\end{array}\right]\label{eq:mobile_3}
\end{equation}
where $h_{1}(x)=x_{1}$ and $h_{2}(x)=x_{2}$. To calculate the relative
degree vector $r=[r_{1},r_{2}]^{\top}$ of \eqref{eq:mobile_3}, recall
\eqref{lem:degree} and consider $r_{1}=1$. One finds
\begin{align}
	\mathcal{L}_{g_{1}}h_{1}= & \left[\begin{array}{ccc}
		1 & 0 & 0\end{array}\right]\left[\begin{array}{c}
		\cos(x_{3})\\
		\sin(x_{3})\\
		0
	\end{array}\right]=\cos(x_{3})\label{eq:mobile_4}\\
	\mathcal{L}_{g_{2}}h_{1}= & \left[\begin{array}{ccc}
		1 & 0 & 0\end{array}\right]\left[\begin{array}{c}
		0\\
		0\\
		1
	\end{array}\right]=0\label{eq:mobile_5}
\end{align}
Since $\mathcal{L}_{g_{1}}h_{1}\neq0$ for $x_{3}\neq\{90,270\}\deg$,
one concludes that $r_{1}=1$. The control input $u_{1}$ appears
in the first output expect for $x_{3}=\{90,270\}$, which is intuitive
since the linear velocity will not contribute to movement in the x direction
when the unicycle has $\theta=\{90,270\}$ (visit Fig. \ref{fig:figure1-1}.(b))
and recall the nonholonomic constraints in \eqref{eq:004}. Consider
$r_{2}=1$, one has
\begin{align}
	\mathcal{L}_{g_{1}}h_{2}= & \left[\begin{array}{ccc}
		0 & 1 & 0\end{array}\right]\left[\begin{array}{c}
		0\\
		\sin(x_{3})\\
		0
	\end{array}\right]=\sin(x_{3})\label{eq:mobile_6}\\
	\mathcal{L}_{g_{2}}h_{2}= & \left[\begin{array}{ccc}
		0 & 1 & 0\end{array}\right]\left[\begin{array}{c}
		0\\
		0\\
		1
	\end{array}\right]=0\label{eq:mobile_7}
\end{align}
Since $\mathcal{L}_{g_{1}}h_{2}\neq0$ for $x_{3}\neq\{0,180\}\deg$,
one concludes that $r_{2}=1$. The control input $u_{2}$ appears
in the second output expect for $x_{3}=\{0,180\}$, which is intuitive
since the linear velocity will not contribute to movement in the y direction
when the unicycle has $\theta=\{0,180\}$ (visit Fig. \ref{fig:figure1-1}.(b))
and recall the nonholonomic constraints in \eqref{eq:004}). In the
view of \eqref{eq:decouplingmatrix-1}, the decoupling input matrix
$A(x)$ can be rewritten as follows:
\begin{equation}
	A(x)=\left[\begin{array}{cc}
		\mathcal{L}_{g_{1}}h_{1} & \mathcal{L}_{g_{2}}h_{1}\\
		\mathcal{L}_{g_{1}}h_{2} & \mathcal{L}_{g_{2}}h_{2}
	\end{array}\right]=\left[\begin{array}{cc}
		\cos(x_{3}) & 0\\
		\sin(x_{3}) & 0
	\end{array}\right]\label{eq:mobile_8-1}
\end{equation}
It becomes apparent that $A(x)$ is singular and in view of \eqref{lem:Full}
the full input-to-state feedback linearization using the control input
\[
u=A^{-1}(x)\left(\left[\begin{array}{c}
	\mathcal{L}_{f}^{r_{1}}h_{1}(x)\\
	\mathcal{L}_{f}^{r_{2}}h_{2}(x)
\end{array}\right]+\left[\begin{array}{c}
	v_{1}\\
	v_{2}
\end{array}\right]\right)
\]
is unsolvable. As such, some modifications are necessary to render
the nonlinear model dynamics in \eqref{eq:nonlinearmodel-1} accounting
for the full input-to-state feedback linearizable form which is the focus of the next subsections. 

\subsection{Dynamic Feedback Linearization (DFL)}

\hspace{0.3cm} DFL also known as the dynamic extension algorithm, is comprehensively
discussed in (\cite{slotine1991applied}, Chapter 6). By analyzing
the decoupling matrix $A(x)$, it becomes apparent that $u_{2}$ is
the main problem as it does not appear in any of the two outputs resulting
in a zero-column which leads to $det(A)=0$. A possible solution is
a decoupling matrix $A(x)$ with $u_{2}$. This can be done by adding
a differential delay to $u_{1}$ where an integrator would allow $u_{2}$
to appear in the $A(x)$ matrix. An integrator in $u_{1}$ adds a
new state $\zeta$ resulting in a new state vector $\bar{x}\in\mathbb{R}^{4}:\bar{x}=\left[x_{1},x_{2},x_{3},\zeta\right]^{\top}$
and new vector fields $\bar{f}(\bar{x})$, $\bar{g}_{1}(\bar{x})$,
and $\bar{g}_{2}(\bar{x})$ as well as new control vector $\mathcal{U}=[\mathcal{U}_{1},\mathcal{U}_{2}]^{\top}$.
The new state $\bar{x}\in\mathbb{R}^{4}:\bar{x}=\left[x_{1},x_{2},x_{3},\zeta\right]^{\top}$
and dynamics can be written as follows: 
\begin{equation}
	\bar{\dot{x}}=\left[\begin{array}{c}
		\zeta\cos(x_{3})\\
		\zeta\sin(x_{3})\\
		\mathcal{U}_{2}\\
		\mathcal{U}_{1}
	\end{array}\right],\hspace{1em}\begin{array}{c}
		y_{1}=x_{1}\\
		y_{2}=x_{2}
	\end{array}\label{eq:mobile10-1}
\end{equation}
In \cite{OrioloIEEETrans}, the authors proved that the extended model
in \eqref{eq:mobile10-1} is feedback linearizable such that the extended
system will be equivalent to a two-chain of double integrators
using the control as follows:

\begin{equation}
	\mathcal{U}=\left[\begin{array}{c}
		v_{1}\cos(x_{3})+v_{2}\sin(x_{3})\\
		-v_{1}\frac{\sin(x_{3})}{\zeta}+v_{2}\frac{\cos(x_{3})}{\zeta}
	\end{array}\right]\label{eq:1101}
\end{equation}

\section{Proposed Scheme}\label{sec:Sec5_Proposed-Scheme}

The key feature of the obstacle avoidance control scheme is the safety
constraints represented by the CBF. In \textcolor{black}{\cite{Aaron2019ECC},}
the authors proposed CBF $\mathcal{H}(x)$ representing a safety metric
as the distance between the moving object and the obstacle. The sufficient
and necessary conditions for the safe maneuvers are based on the usage
of class $\mathcal{K_{\infty}}$ function similar to Lyapunov functions
such that $\mathcal{H}(x,u)\geq\mathcal{-\aleph}(\mathcal{H}(x))\Longleftrightarrow C$
is invariant. The proposed scheme makes use of the CBF concepts in
the MPC formulation cascaded by DFL defined in \eqref{eq:1101}.
The motivation for using the cascaded scheme will be to unlock the
usage of a linear MPC with all its merit compared to the nonlinear
MPC in terms of computational cost and ease of stability guarantees. 
To formulate the MPC on the linear equivalent model rendered by the
DFL is the mapping between the states and control inputs of the original
nonlinear underactuated model of the unicycle and the linear equivalent
the model needs to be presented. 
\begin{lem}
	\label{thm:thm1}(Input and state  mapping of MPC-DFL for
	Unicycle Model) Recall the Unicycle extended model dynamics in \eqref{eq:mobile10-1}.
	Using the control input in \eqref{eq:1101}, the optimization problem
	of the MPC in the cascaded scheme of MPC-DFL can be formulated on
	a linear equivalent model dynamics as $\dot{z}=A_{z}z+B_{z}v$ with
	the following state and input  mapping:
	\begin{equation}
		\begin{aligned}z_{1}= & x_{1}\\
			z_{2}= & \zeta\cos(x_{3})\\
			z_{3}= & x_{2}\\
			z_{4}= & \zeta\sin(x_{3})
		\end{aligned}
		\label{eq:theorem12}
	\end{equation}
	\begin{equation}
		\left[\begin{array}{c}
			v_{1}\\
			v_{2}
		\end{array}\right]=\left[\begin{array}{cc}
			\cos(x_{3}) & -\zeta\sin(x_{3})\\
			\sin(x_{3}) & \zeta\cos(x_{3})
		\end{array}\right]\left[\begin{array}{c}
			\mathcal{U}_{1}\\
			\mathcal{U}_{2}
		\end{array}\right]\label{eq:theorem11}
	\end{equation}
\end{lem}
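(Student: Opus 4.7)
My plan is to proceed by direct change of coordinates followed by verification that the resulting dynamics are linear, and then to obtain the input mapping by simple matrix inversion of \eqref{eq:1101}.

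First, I would take the proposed coordinate transformation $z_1=x_1$, $z_2=\zeta\cos(x_3)$, $z_3=x_2$, $z_4=\zeta\sin(x_3)$ and differentiate each component along the trajectories of the extended dynamics \eqref{eq:mobile10-1}. For $z_1$ and $z_3$ the chain rule is immediate: $\dot z_1=\dot x_1=\zeta\cos(x_3)=z_2$ and $\dot z_3=\dot x_2=\zeta\sin(x_3)=z_4$. For $z_2$ and $z_4$, applying the product rule together with $\dot\zeta=\mathcal{U}_1$ and $\dot x_3=\mathcal{U}_2$ yields
\begin{equation}
\dot z_2 = \mathcal{U}_1\cos(x_3)-\zeta\sin(x_3)\mathcal{U}_2,\qquad \dot z_4 = \mathcal{U}_1\sin(x_3)+\zeta\cos(x_3)\mathcal{U}_2.
\end{equation}

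Next, I would substitute the DFL control law \eqref{eq:1101} into these two expressions and collapse the result using $\cos^2(x_3)+\sin^2(x_3)=1$. After cancellation of the cross terms in $\sin(x_3)\cos(x_3)$, I expect the clean identities $\dot z_2=v_1$ and $\dot z_4=v_2$. Combined with the earlier two equalities, this produces exactly two decoupled chains of double integrators, which I would then package into the linear state-space form $\dot z = A_z z + B_z v$ with
\begin{equation}
A_z=\begin{bmatrix}0&1&0&0\\0&0&0&0\\0&0&0&1\\0&0&0&0\end{bmatrix},\qquad B_z=\begin{bmatrix}0&0\\1&0\\0&0\\0&1\end{bmatrix},
\end{equation}
thereby establishing the linear equivalent model promised in the statement.

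For the input mapping \eqref{eq:theorem11}, I would rewrite \eqref{eq:1101} in matrix form as $\mathcal{U}=M(x_3,\zeta)v$ with $M$ having determinant $1/\zeta$ (nonzero away from the singular set $\zeta=0$). Inverting $M$ then gives $v=M^{-1}\mathcal{U}$, which after simplification matches exactly the expression in \eqref{eq:theorem11}. This also provides the consistency check that the input mapping stated in the lemma is nothing other than the inverse of the DFL assignment already used in the state computation.

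The only real obstacle is the caveat concerning the singularity: the transformation $(x_1,x_2,x_3,\zeta)\mapsto(z_1,z_2,z_3,z_4)$ is a diffeomorphism only where $\zeta\neq 0$, and this is also where $M$ is invertible. I would therefore state the lemma as holding on the regular region $\zeta\neq 0$ (i.e.\ whenever the robot has nonzero linear velocity), consistent with the standard DFL obstruction for the unicycle noted in \cite{OrioloIEEETrans}. Aside from this caveat, the argument reduces to a one-line trigonometric simplification and a $2\times 2$ matrix inversion, so I do not anticipate any deeper technical difficulty.
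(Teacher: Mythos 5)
Your proof is correct, and the verification at its core (differentiating $z_2=\zeta\cos(x_3)$, $z_4=\zeta\sin(x_3)$ by the product rule, substituting \eqref{eq:1101}, and cancelling the $\sin(x_3)\cos(x_3)$ cross terms to get $\dot z_2=v_1$, $\dot z_4=v_2$) is exactly the computation the paper performs in the last stage of its own proof. The difference is in what comes before: the paper does not take the transformation \eqref{eq:theorem12} and the control law \eqref{eq:1101} as given, but \emph{derives} them from the feedback-linearization machinery of Lemmas \ref{lem:degree} and \ref{lem:Full} --- it computes the Lie derivatives to establish $\bar r_1=\bar r_2=2$ with $\bar r_1+\bar r_2=n=4$, forms the decoupling matrix $\bar A(x)$ in \eqref{eq:moible40} with $\det\bar A(x)=\zeta$, obtains $\mathcal{U}=\bar A^{-1}(x)(0+v)$ since $\mathcal{L}_{\bar f}^2h_i=0$, and reads the coordinates off as $\Phi(\bar x)=[h_1,\mathcal{L}_{\bar f}h_1,h_2,\mathcal{L}_{\bar f}h_2]^\top$. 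Your route is the more elementary one: a direct check that the stated diffeomorphism and the stated feedback produce two decoupled double integrators, plus the observation that \eqref{eq:theorem11} is simply $v=\bar A(x)\,\mathcal{U}$, i.e.\ the inverse of the matrix in \eqref{eq:1101} (your determinant $1/\zeta$ and the resulting inverse are both right). What the paper's longer derivation buys is an explanation of \emph{why} the transformation has this form and a certificate that the full input-to-state linearization conditions of Lemma \ref{lem:Full} are satisfied; what yours buys is brevity, since the lemma statement already hands you both maps. Both treatments identify the same singular set $\zeta=0$, so your caveat about restricting to $\zeta\neq 0$ matches the paper's remark on $\det(\bar A(x))=\zeta$ and the $\zeta_{threshold}$ workaround.
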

\begin{proof}
	Recall the extended dynamics in the new MIMO system are also affine
	similar to \eqref{eq:pr2}, where $\bar{f}(\bar{x})$, $\bar{g}_{1}(\bar{x})$,
	and $\bar{g_{2}}(\bar{x})$ can be re-expressed as follows:
	\begin{equation}
		\bar{f}(\bar{x})=\left[\begin{array}{c}
			\zeta\cos(x_{3})\\
			\zeta\sin(x_{3})\\
			0\\
			0
		\end{array}\right],\bar{g}_{1}(\bar{x})=\left[\begin{array}{c}
			0\\
			0\\
			0\\
			1
		\end{array}\right],\bar{g}_{2}(\bar{x})=\left[\begin{array}{c}
			0\\
			0\\
			1\\
			0
		\end{array}\right].\label{eq:mobile11-1}
	\end{equation}
	Note that the new relative degree of the system will be higher due
	to the presence of the integrator. Again the point of departure is
	computing the relative degree. By recalling \eqref{lem:degree} and
	checking if $\bar{r}_{1}=1$, one has
	\begin{equation}
		\mathcal{L}_{\bar{g}_{1}}h_{1}=\left[\begin{array}{cccc}
			1 & 0 & 0 & 0\end{array}\right]\left[\begin{array}{c}
			0\\
			0\\
			0\\
			1
		\end{array}\right]=0\label{eq:mobile12-1}
	\end{equation}
	\begin{equation}
		\mathcal{L}_{\bar{g}_{2}}h_{1}=\left[\begin{array}{cccc}
			1 & 0 & 0 & 0\end{array}\right]\left[\begin{array}{c}
			0\\
			0\\
			1\\
			0
		\end{array}\right]=0\label{eq:mobile13-1}
	\end{equation}
	since $\mathcal{L}_{\bar{g}_{1}}h_{1}=\mathcal{L}_{\bar{g}_{2}}h_{1}=0$,
	one could conclude that $\bar{r}_{1}\neq1$. Now by checking if $\bar{r}_{1}=2$,
	one obtains
	\begin{align}
		\mathcal{L}_{\bar{g}_{1}}\mathcal{L}_{\bar{f}}h_{1}= & \mathcal{L}_{\bar{g_{1}}}(\zeta\cos(x_{3}))\nonumber \\
		= & \left[\begin{array}{cccc}
			0 & 0 & -\zeta\sin(x_{3}) & \cos(x_{3})\end{array}\right]\left[\begin{array}{c}
			0\\
			0\\
			0\\
			1
		\end{array}\right]\nonumber \\
		= & \cos(x_{3})\label{eq:13}
	\end{align}
	where
	\begin{equation}
		\mathcal{L}_{\bar{f}_{1}}h_{1}=\left[\begin{array}{cccc}
			1 & 0 & 0 & 0\end{array}\right]\left[\begin{array}{c}
			\zeta\cos(x_{3})\\
			\zeta\sin(x_{3})\\
			0\\
			0
		\end{array}\right]=\zeta\cos(x_{3})\label{eq:mobile14-1-1}
	\end{equation}
	Since $\mathcal{L}_{\bar{g}_{1}}\mathcal{L}_{\bar{f}}h\neq0$, this
	implies that $\bar{r}_{1}=2$ such that 
	\begin{align}
		\mathcal{L}_{\bar{g}_{2}}\mathcal{L}_{\bar{f}}h_{1}= & \mathcal{L}_{\bar{g_{2}}}(\zeta\cos(x_{3}))\nonumber \\
		= & \left[\begin{array}{cccc}
			0 & 0 & -\zeta\sin(x_{3}) & \cos(x_{3})\end{array}\right]\left[\begin{array}{c}
			0\\
			0\\
			1\\
			0
		\end{array}\right]\nonumber \\
		= & -\zeta\sin(x_{3})\label{eq:16}
	\end{align}
	Considering that $\bar{r}_{2}=1$, one finds
	\begin{equation}
		\mathcal{L}_{\bar{g}_{1}}h_{2}=\left[\begin{array}{cccc}
			0 & 1 & 0 & 0\end{array}\right]\left[\begin{array}{c}
			0\\
			0\\
			0\\
			1
		\end{array}\right]=0\label{eq:mobile17-1}
	\end{equation}
	\begin{equation}
		\mathcal{L}_{\bar{g}_{2}}h_{2}=\left[\begin{array}{cccc}
			0 & 1 & 0 & 0\end{array}\right]\left[\begin{array}{c}
			0\\
			0\\
			1\\
			0
		\end{array}\right]=0\label{eq:mobile18-1}
	\end{equation}
	Since $\mathcal{L}_{\bar{g}_{1}}h_{2}=\mathcal{L}_{\bar{g}_{2}}h_{2}=0$,
	one finds $\bar{r}_{2}\neq1$. Let us check for $\bar{r}_{2}=2$:
	\begin{align}
		\mathcal{L}_{\bar{g}_{1}}\mathcal{L}_{\bar{f}}h_{2}= & \mathcal{L}_{\bar{g_{1}}}(\zeta\sin(x_{3}))\nonumber \\
		= & \left[\begin{array}{cccc}
			0 & 0 & \zeta\cos(x_{3}) & \sin(x_{3})\end{array}\right]\left[\begin{array}{c}
			0\\
			0\\
			0\\
			1
		\end{array}\right]=\sin(x_{3})\label{eq:mobile20-1}
	\end{align}
	where
	\begin{equation}
		\mathcal{L}_{\bar{f}}h_{2}=\left[\begin{array}{cccc}
			0 & 1 & 0 & 0\end{array}\right]\left[\begin{array}{c}
			\zeta\cos(x_{3})\\
			\zeta\sin(x_{3})\\
			0\\
			0
		\end{array}\right]=\zeta\sin(x_{3})\label{eq:mobile-21-2}
	\end{equation}
	Given $\mathcal{L}_{\bar{g}_{1}}\mathcal{L}_{\bar{f}}h\neq0_{1}$,
	it can be conclude that $\bar{r}_{2}=2$ and
	\begin{align}
		\mathcal{L}_{\bar{g}_{2}}\mathcal{L}_{\bar{f}}h_{2}= & \mathcal{L}_{\bar{g_{2}}}(\zeta\sin(x_{3}))\nonumber \\
		= & \left[\begin{array}{cccc}
			0 & 0 & \zeta\cos(x_{3}) & \sin(x_{3})\end{array}\right]\left[\begin{array}{c}
			0\\
			0\\
			1\\
			0
		\end{array}\right]=\zeta\cos(x_{3})\label{eq:mobile-22-3}
	\end{align}
	Therefore, the relative degree has increased to $\bar{r}_{1}=\bar{r}_{2}=2$
	with $\bar{r}_{1}+\bar{r}_{2}=n$. Let us recall \eqref{eq:decouplingmatrix-1}
	and consider \eqref{eq:13}, \eqref{eq:16}, \eqref{eq:mobile20-1},
	\eqref{eq:mobile-22-3}, one obtains the new decoupling matrix and
	it's inverse as follows:
	\begin{equation}
		\bar{A}(x)=\left[\begin{array}{cc}
			\cos(x_{3}) & -\zeta\sin(x_{3})\\
			\sin(x_{3}) & \zeta\cos(x_{3})
		\end{array}\right]\label{eq:moible40}
	\end{equation}
	\begin{equation}
		\bar{A}^{-1}(x)=\left[\begin{array}{cc}
			\cos(x_{3}) & \sin(x_{3})\\
			\frac{-\sin(x_{3})}{\zeta} & \frac{\cos(x_{3})}{\zeta}
		\end{array}\right]\label{eq:mobile41}
	\end{equation}
	where $det(\bar{A}(x))=\zeta\cos^{2}(x)+\zeta\sin^{2}(x)=\zeta$.
	It becomes obvious that $\bar{A}(x)$ is singular only at $\zeta=0$.
	Let us recall \eqref{lem:degree}. The new control input $\mathcal{U}$
	can be written as follows:
	\begin{align}
		\mathcal{U} & =\left[\begin{array}{cc}
			\cos(x_{3}) & \sin(x_{3})\\
			\frac{-\sin(x_{3})}{\zeta} & \frac{\cos(x_{3})}{\zeta}
		\end{array}\right]\left(\left[\begin{array}{c}
			\mathcal{L}_{\bar{f}}^{2}h_{1}(x)\\
			\mathcal{L}_{\bar{f}}^{2}h_{2}(x)
		\end{array}\right]+\left[\begin{array}{c}
			v_{1}\\
			v_{2}
		\end{array}\right]\right)\nonumber \\
		& =\left[\begin{array}{c}
			v_{1}\cos(x_{3})+v_{2}\sin(x_{3})\\
			-v_{1}\frac{\sin(x_{3})}{\zeta}+v_{2}\frac{\cos(x_{3})}{\zeta}
		\end{array}\right]\label{eq:mobile32-1-1}
	\end{align}
	where 
	\begin{align}
		\mathcal{L}_{\bar{f}}^{2}h_{1}(x) & =\mathcal{L}_{\bar{f}}\mathcal{L}_{\bar{f}}h_{1}(x)=\mathcal{L}_{\bar{f}}(\zeta\cos(x_{3}))\nonumber \\
		& =\left[\begin{array}{cccc}
			0 & 0 & -\zeta\sin(x_{3}) & \cos(x_{3})\end{array}\right]\left[\begin{array}{c}
			\zeta\cos(x_{3})\\
			\zeta\sin(x_{3})\\
			0\\
			0
		\end{array}\right]=0\label{eq:mobile28-1}
	\end{align}
	\begin{align}
		\mathcal{L}_{\bar{f}}^{2}h_{2}(x) & =\mathcal{L}_{\bar{f}}\mathcal{L}_{\bar{f}}h_{2}(x)=\mathcal{L}_{\bar{f}}(\zeta\sin(x_{3}))\nonumber \\
		& =\left[\begin{array}{cccc}
			0 & 0 & \zeta\cos(x_{3}) & \sin(x_{3})\end{array}\right]\left[\begin{array}{c}
			\zeta\cos(x_{3})\\
			\zeta\sin(x_{3})\\
			0\\
			0
		\end{array}\right]=0\label{eq:mobile30-1}
	\end{align}
	The proposed change of coordinates for the new system in $z$ coordinates
	represents the outputs given the first derivatives $\mathcal{L}_{\bar{f}}h_{1}$
	and $\mathcal{L}_{\bar{f}}h_{2}.$ Using \eqref{eq:mobile14-1-1},\eqref{eq:mobile-21-2},
	the change coordinates $\Phi(\overline{x})$ can be expressed as follows:
	\begin{equation}
		\Phi(\overline{x})=\left[\begin{array}{c}
			h_{1}\\
			\mathcal{L}_{\bar{f}}h_{1}\\
			h_{2}\\
			\mathcal{L}_{\bar{f}}h_{2}
		\end{array}\right]=\left[\begin{array}{c}
			x_{1}\\
			\zeta\cos(x_{3})\\
			x_{2}\\
			\zeta\sin(x_{3})
		\end{array}\right].\label{eq:mobile34-1}
	\end{equation}
	and the mapping from $z$ to $x$ coordinates is equivalent to
	
	\begin{equation}
		\begin{array}{cc}
			z_{1}= & x_{1}\\
			z_{2}= & \zeta\cos(x_{3})\\
			z_{3}= & x_{2}\\
			z_{4}= & \zeta\sin(x_{3})
		\end{array}\Longleftrightarrow\begin{array}{cc}
			x_{1}= & z_{1}\\
			x_{2}= & z_{3}\\
			x_{3}= & \tan^{-1}(\frac{z_{4}}{z_{2}})\\
			\zeta= & z_{2}\sqrt{1+(\frac{z_{4}}{z_{2}})^{2}}
		\end{array}\label{eq:mobile37-1}
	\end{equation}
	Thereby, the new state equations in $z$ coordinates can be described
	by
	
	\begin{align}
		\dot{z}_{1} & =\zeta\cos(x_{3})=z_{2}\sqrt{1+(\frac{z_{4}}{z_{2}})^{2}}\times\cos(\tan^{-1}(\frac{z_{4}}{z_{2}}))\nonumber \\
		& =z_{2}\sqrt{1+(\frac{z_{4}}{z_{2}})^{2}}\times\frac{1}{\sqrt{1+(\frac{z_{4}}{z_{2}})^{2}}}=z_{2}\label{eq:mobile39-1}
	\end{align}
	
	\begin{align}
		\dot{z}_{2}= & \frac{d}{dt}(\zeta\cos(x_{3}))=\dot{\zeta}\cos(x_{3})-\zeta\sin(x_{3})\dot{x}_{3}\nonumber \\
		= & [v_{1}\cos(x_{3})+v_{2}\sin(x_{3})]\cos(x_{3})\nonumber \\
		& -\zeta[\frac{-v_{1}}{\zeta}\sin(x_{3})+\frac{v_{2}}{\zeta}\cos(x_{3})]\sin(x_{3})\nonumber \\
		= & v_{1}\cos^{2}(x_{3})+v_{1}\sin^{2}(x_{3})=v_{1}\label{eq:mobile45-1}
	\end{align}
	Likewise, $\dot{z}_{3}=z_{4}$ and $\dot{z}_{4}=v_{2}$. Hence, the
	new extended system in $z$ coordinates is a two-chain integrator
	described by
	
	\begin{equation}
		\left[\begin{array}{c}
			\dot{z_{1}}\\
			\dot{z}_{2}\\
			\dot{z}_{3}\\
			\dot{z_{4}}
		\end{array}\right]=\left[\begin{array}{c}
			z_{2}\\
			v_{1}\\
			z_{4}\\
			v_{2}
		\end{array}\right]\label{eq:mobile46-1}
	\end{equation}
\end{proof}
\hspace{-0.3cm}Now, our goal is to propose a cascaded scheme of LMPC with the DFL
controller. The main objective of the Feedback linearizing controller
in \eqref{eq:1101} is to render the nonlinear dynamics of
the unicycle model in \eqref{eq:nonlinearmodel-1} in form of a linear
representation as in \eqref{eq:mobile46-1}, which will lead to faster
online solution and stability guarantees in comparison with the nonlinear
MPC. Fig. \ref{fig:propsedscheme} depicts the proposed  Safety Critical Model Predictive Control based on Dynamic Feedback Linearization (SCMPCDFL) control scheme .

\begin{figure*}[!htbh]
	\begin{centering}
		\includegraphics[width=0.8\textwidth]{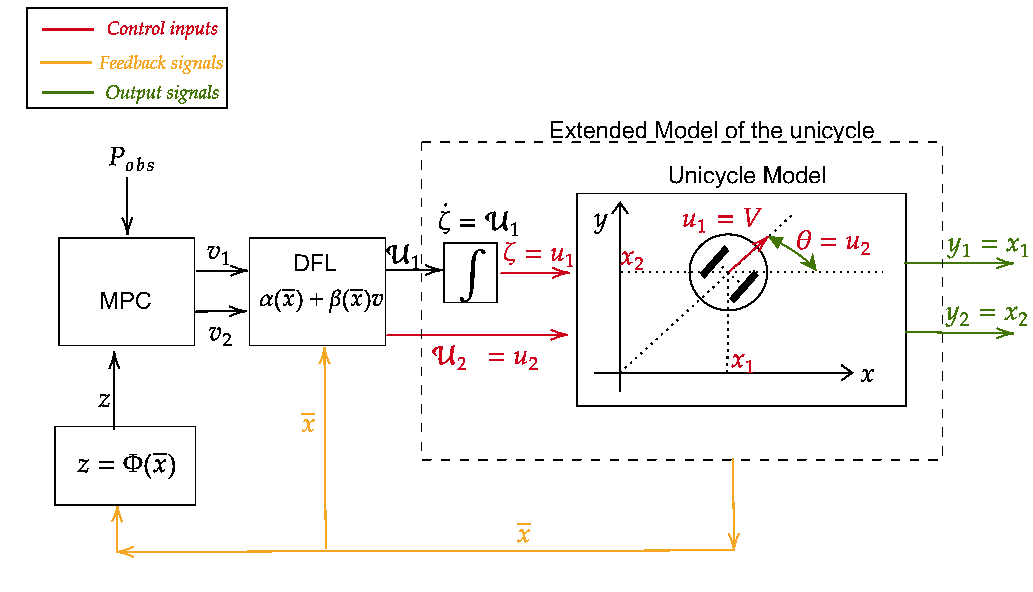}
		\par\end{centering}
	\caption{SCMPCDFL control scheme for the unicycle ground robot.}
	\label{fig:propsedscheme}
\end{figure*}
\hspace{-0.3cm}Let us recall the new extended system in \eqref{eq:mobile46-1} and
discretize the continuous system dynamics using zero-order hold discretization with sampling time $T_{s}$. The following Quadratic
Constraint Quadratic Programming (QCQP) optimization problem can be
formulated:
\begin{align}
	J & =\text{min}_{v_{d}(k)}\sum_{i=0}^{N-1}\left[z_{d}(k+i|k)^{\top}Qz_{d}(k+i|k)\right.\nonumber \\
	& \hspace{1em}\left.+v_{d}(k+i|k)^{\top}Rv_{d}(k+i|k)\right]+z_{d}(N|k)^{\top}\bar{Q}z_{d}(N|k)
	\label{eq:51}
\end{align}
subject to
\begin{gather}
	z_{k+1}=A_{e}z_{k}+B_{e}v_{k},\forall k=0,\ldots.N-1\label{eq:52}\\
	\left[\begin{array}{c} 
		\underbar{\ensuremath{z_{1}}}\\
		\underbar{\ensuremath{z_{3}}}
	\end{array}\right]\leq\left[\begin{array}{c}
		z_{1}\\
		z_{3}
	\end{array}\right]\leq\left[\begin{array}{c}
		\bar{z}_{1}\\
		\bar{z}_{3}
	\end{array}\right],\forall k=0,\ldots.N-1\label{eq:53}\\
	v_{\min}\leq v\leq v_{\max},\forall k=0,\ldots,N-1.\label{eq:531}\\
	\triangle\mathcal{H}(z_{d}(k+1))\geq-\gamma\mathcal{H}(z_{d}(k)),k=0,\ldots,N-1.\label{eq:54}
	\\
	\textcolor{black}{
		v_{\min}\leq K(A_{d}+B_{d}K)^{i}z_{d}(k+N|k)\leq v_{\max},\forall k=0,\ldots,N_{c}.\label{eq:45-2}
	}\\
	\textcolor{black}{
		\left[\begin{array}{c}
			\underline{z}_{1}\\
			\underline{z}_{3}
		\end{array}\right]\leq(A_{d}+B_{d}K)^{i}z_{d}(k+N|k)\leq\left[\begin{array}{c}
			\overline{z}_{1}\\
			\overline{z}_{3} \notag
		\end{array}\right]
	}
	,\\ \forall k=0,\ldots,N_{c}.\label{eq:47}
\end{gather}
where $A_{e}$ and $B_{e}$ denote the discrete system matrices of
\eqref{eq:mobile46-1}, $\underbar{\ensuremath{z_{1}}}$ and $\underbar{\ensuremath{z_{3}}}$
are the minimum value of $z_{1}$, and $z_{3}$, respectively, and
each of $\overline{z}_{1}$ and $\overline{z}_{3}$ refer to the maximum value. The prediction horizon is denoted by $N$ and $N_{c}$ is the constraint
checking horizon.
The proposed control barrier function is defined as follows:

\begin{equation}
	\mathcal{H}(x_{k})=(z_{1}-x_{obs})^{2}+(z_{3}-y_{obs})^{2}-r_{obs}^{2}\label{eq:100}
\end{equation}
such that $x_{obs}$ and $y_{obs}$ describe the $x$ and $y$ coordinates
of the spherical obstacle, respectively and $r_{obs}$ is the radius of the obstacle. It is worth noting that all the constraints
are linear except the Quadratic safety constraint defined in \eqref{eq:54} which is quadratic.
In view of \eqref{eq:54}, one can define the level set of CBF constraints
as follows:

\begin{equation}
	C_{k}=\{x\in\mathbb{R}^{2}:\mathcal{H}(x_{k})=(1-\gamma)\mathcal{H}(x_{k+1})\}\label{eq:101}
\end{equation}
selecting a small value of $\gamma$ will impose a stronger safety
constraint which could potentially lead to unfeasible optimization.
On the other side, selecting a large value of $\gamma$ (e.g., $\gamma=0.99$)
could significantly relax the safety constraint leading to a feasible
solution, however with more risk of having a collision, especially with
a short-sighted prediction $N$. Accordingly, in the next subsection,
the stability of the proposed scheme will be discussed. \textcolor{black}{One
	important remark will come from}\textcolor{red}{{} \eqref{eq:mobile41}, \eqref{eq:mobile32-1-1}}\textcolor{black}{,
	where an expected singularity will happen when $\zeta=0$. One common
	approach to solve the singularity issue is to keep the actual control
	commands bounded or zero when $\zeta$ approaches a certain low value
	donated by $\zeta_{threshold}$ \cite{OrioloIEEETrans}}. Fig \ref{fig:CBF}
visualizes the propsed CBF in \eqref{eq:100} and the level sets in
\eqref{eq:101}.

\begin{figure}[h]
	\begin{centering}
		\includegraphics[scale=0.5]{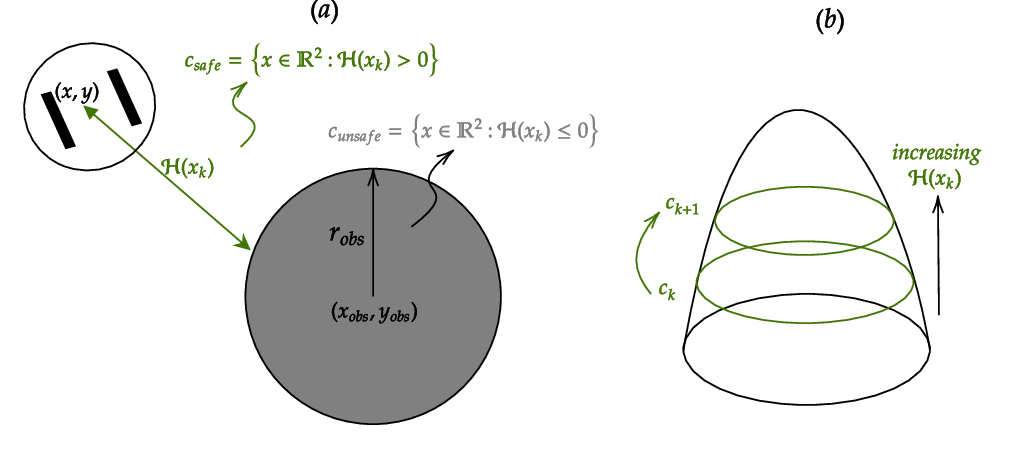}
		\par\end{centering}
	\caption{In (a), $c_{safe}$ and $c_{unsafe}$ are the sets of safe and unsafe
		points in $\mathbb{R}^{3}$ respectively and $\mathcal{H}(x_{k})$
		is the proposed CBF in \eqref{eq:100}. In (b), $c_{k}$ is the level
		set of the CBF defined in \eqref{eq:101}. at a given time $k.$}
	\label{fig:CBF}
\end{figure}

\subsection{Stability and Complexity Analysis}

\hspace{0.3cm} In this work, we consider $\gamma$ as a hyperparameter to be selected
such that the feasibility of the problem holds depending on the number
of obstacles against the robot. Enforcing
a very strong safety constraint could lead to unfeasible solutions.
\begin{assum}\label{claim:The-optimal-predicted} The optimization problem \eqref{eq:51}-\eqref{eq:47} is feasible for the initial time k = 0.\end{assum}

\begin{thm}
	\label{thm:thm2} (Asymptotic Convergence of SCMPCDFL scheme) The extended dynamics of the unicycle model in \eqref{eq:mobile10-1} is asymptotically
	stable and the obstacles are avoided using SCMPCDFL scheme if Assumption
	\ref{claim:The-optimal-predicted} hold true
	and the terminal weight $\bar{Q}$ in \eqref{eq:51} is selected to be equivalent
	to the infinite horizon cost where
	\begin{equation}
		\sum_{i=0}^{\infty}(\Vert z(i)\Vert_{Q}^{2}+\Vert v(i)\Vert_{R}^{2})=z^{\top}(0)\bar{Q}z(0)\label{55-1}
	\end{equation}
	and
	\begin{equation}
		\bar{Q}-(A_{d}+B_{d}K)^{\top}\bar{Q}(A_{d}+B_{d}K)=Q+K^{\top}RK.\label{eq:56-1}
	\end{equation}
	such that the optimization problem in \eqref{eq:51}-\eqref{eq:47} is recursively feasible given the K is stabilizing $(A_{d}+B_{d}K)^{i}$ $\forall i=1,\dots N_{c}$  and $\mathcal{H}((A_{d}+B_{d}K)z_{d})>\mathcal(1-\gamma)\mathcal{H}(z_{d})$ $\forall z_d\in\mathcal{Z}_f$ with sufficiently large $N_{c}$ and $0<\gamma\leq1$.
\end{thm}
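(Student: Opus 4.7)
The plan is to prove the theorem in three logical stages: (i) recursive feasibility of the QCQP at every sampling instant, (ii) monotonic decrease of the optimal value function and hence asymptotic stability, and (iii) obstacle avoidance as a direct consequence of the CBF constraint being enforced over the closed-loop trajectory. Because Lemma~\ref{thm:thm1} already provides the exact linear equivalent model $\dot z = A_z z + B_z v$ and its discrete counterpart $(A_d,B_d)$, all arguments can be carried out on the $z$-coordinates and then pulled back to the original state via the diffeomorphism in \eqref{eq:mobile37-1}.

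First I would establish recursive feasibility. Assuming Assumption~\ref{claim:The-optimal-predicted} gives a feasible sequence $\{v^{\star}(0|0),\ldots,v^{\star}(N-1|0)\}$ with associated predicted states $\{z^{\star}(i|0)\}$. At the next time step $k{+}1$, I would construct a candidate sequence by shifting one step and appending the terminal feedback: $\tilde v(i|k{+}1)=v^{\star}(i{+}1|k)$ for $i=0,\ldots,N-2$ and $\tilde v(N-1|k{+}1)=K z^{\star}(N|k)$. The dynamics constraint \eqref{eq:52} is automatic by construction. The input constraint \eqref{eq:531} and the state constraint \eqref{eq:53} at the appended step, as well as the checking-horizon conditions \eqref{eq:45-2}-\eqref{eq:47}, are exactly guaranteed by the hypotheses that $K$ stabilizes $(A_d+B_dK)$ and that the iterates $(A_d+B_dK)^i z_d(k+N|k)$ stay in the admissible box for all $i\le N_c$; taking $N_c$ sufficiently large together with the contractivity of $(A_d+B_dK)$ delivers positive invariance of a terminal set $\mathcal{Z}_f$. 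The CBF constraint \eqref{eq:54} at the new last step is precisely the standing hypothesis $\mathcal{H}((A_d+B_dK)z_d)\ge(1-\gamma)\mathcal{H}(z_d)$ for $z_d\in\mathcal{Z}_f$.

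Next I would use the optimal value $J^{\star}(k)$ as a Lyapunov function on the $z$-coordinates. Evaluating the cost of the shifted candidate sequence and using \eqref{eq:56-1} to cancel the terminal-stage contribution,
\begin{equation}
J(\tilde v,k{+}1) = J^{\star}(k) - \|z(k)\|_Q^2 - \|v^{\star}(0|k)\|_R^2,
\end{equation}
so that by optimality $J^{\star}(k{+}1)\le J(\tilde v,k{+}1)\le J^{\star}(k)-\|z(k)\|_Q^2-\|v^{\star}(0|k)\|_R^2$. Positive definiteness of $Q,R$ and of $\bar Q$ (via \eqref{55-1}) makes $J^{\star}$ a valid Lyapunov function on the linear equivalent model, proving asymptotic stability of $z=0$. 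Because the change of coordinates $\Phi(\bar x)$ in \eqref{eq:mobile34-1} is a local diffeomorphism away from the $\zeta=0$ singularity, asymptotic stability transfers to the extended unicycle state $\bar x$. Obstacle avoidance follows because the CBF constraint \eqref{eq:54} is enforced at every instant along the closed loop, which by the standard discrete-time CBF forward invariance argument keeps $\mathcal{H}(z_k)\ge(1-\gamma)^k\mathcal{H}(z_0)\ge 0$ and hence $z(k)\in C$ for all $k$.

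The main obstacle I anticipate is the interplay between the quadratic, nonconvex CBF constraint and the construction of the terminal ingredients. The standard linear-MPC terminal argument relies on a control-invariant polyhedral terminal set, but here the added CBF requirement $\mathcal{H}((A_d+B_dK)z)\ge(1-\gamma)\mathcal{H}(z)$ on $\mathcal{Z}_f$ is a quadratic condition that must be verified jointly with box invariance; I would handle this by choosing $\mathcal{Z}_f$ as a sublevel set of the Lyapunov matrix $\bar Q$ intersected with the state box, shrinking it until the CBF inequality holds by continuity (it holds trivially far from the obstacle since $\mathcal{H}$ grows quadratically), and then using the finite checking horizon $N_c$ with a sufficiently large $N_c$ to certify box-admissibility without explicitly computing the infinite-horizon invariant set. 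A secondary technical point is excluding the DFL singularity $\zeta=0$, which is addressed by the threshold safeguard $\zeta_{threshold}$ mentioned after \eqref{eq:101} and can be absorbed into the state constraint \eqref{eq:53}.
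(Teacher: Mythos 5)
Your proposal is correct and follows essentially the same route as the paper: a shifted candidate sequence with the terminal feedback $Kz_d^{\star}(k{+}N|k)$ appended for recursive feasibility, the optimal value $J^{\star}$ as a Lyapunov function with the terminal-cost cancellation from \eqref{eq:56-1} for asymptotic stability, and the discrete CBF condition for safe (forward) invariance of $\mathcal{Z}_f$. If anything you are more careful than the paper in two spots: you correctly state the value-function decrease as an inequality $J^{\star}(k{+}1)\le J^{\star}(k)-\|z(k)\|_Q^2-\|v^{\star}(0|k)\|_R^2$ obtained by suboptimality of the candidate (the paper asserts equality), and you make explicit the chain $\mathcal{H}(z_k)\ge(1-\gamma)^k\mathcal{H}(z_0)\ge0$ and the construction of $\mathcal{Z}_f$ reconciling the quadratic CBF condition with box invariance, which the paper leaves implicit.
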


\begin{proof}
	Recall the linear equivalent model of the unicycle robot in \eqref{eq:mobile46-1} and the cost function in \eqref{eq:51}. The
	use of the terminal cost function $\bar{Q}$ to solve the Lyapunov
	function in \eqref{eq:56-1} will render the optimal cost function
	for the next time step  $J^{*}(k+1)$ is equal to $J^{*}(k+1)=J^{*}(k)-(\Vert z_{d}(k)\Vert_{Q}^{2}+\Vert v(k)\Vert_{R}^{2})$. 
	Hence, it can be concluded that $J^{*}(k+1)\rightarrow0$ as $k\rightarrow0$. From \eqref{eq:mobile37-1},
	one finds as $z\rightarrow0$ and $\bar{x}\rightarrow0$ and this proves the asymptotic convergence. We prove the recursive stability using the classical terminal constraints. 
	Let $\tilde{v}_{d}(k+1)$ denote the input sequence at time $k+1$ corresponding to the optimal prediction
	at time $k$. For feasible $\tilde{v}_{d}(k+1)$ the $N$th element (the tail of $v_{d}^{*}(k+N|k)=Kz_{d}^{*}(k+N|k)$ is required to satisfy the terminal constraint. This is equivalent to the constraints on the terminal state prediction $z_{d}(k+N|k)\in\mathcal{Z}_{f}$, where $\mathcal{Z}_{f}$ is the terminal set \cite{scokaert1996infinite}. The necessary and sufficient conditions for the predictions generated by the tail $\tilde{v}_{d}(k+1)$ are feasible at time $k+1$ is to have $\mathcal{Z}_{f}$ is control and safe invariant. The terminal set $\mathcal{Z}_{f}$ is control invariant  if
	$(A_{d}+B_{d}K)z_{d}(k+N|k)\in\text{\ensuremath{\mathcal{Z}_{f}}\hspace{0.1cm}\ensuremath{\forall z_{d}(k+N|k)\in\mathcal{Z}_{f}}}.  
	$ The terminal set $\mathcal{Z}_{f}$ is safe invariant if 
	$
	\triangle\mathcal{H}(z_{d}(k+1|N))\geq-\gamma\mathcal{H}(z_{d}(k|N))\text{\ensuremath{}\hspace{0.1cm}\ensuremath{\forall z_{d}(k+N|k)\in\mathcal{Z}_{f}.}}
	$
	To render $\mathcal{Z}_{f}$ control invariant, we
	need to verify that
	\begin{flalign}
		v_{\text{max}} & \leq K(A_{d}+B_{d}K)^{i}z_{d}(k+N|k)\leq v_{\text{max}}\label{eq:62}\\
		\left[\begin{array}{c}
			\underline{z}_{1}\\
			\underline{z}_{3}\\
		\end{array}\right]\leq & (A_{d}+B_{d}K)^{i}z_{d}(k+N|k)\leq\left[\begin{array}{c}
			\overline{z}_{1}\\
			\overline{z}_{3}\\
		\end{array}\right]
		& \forall i\geq0\label{eq:64}
	\end{flalign}
	one can design $K$ to render $(A_{d}+B_{d}K)^{i}$ stable such that the
	norm of $|\lambda(A_{d}+B_{d}k)|<1.$ To render $\mathcal{Z}_{f}$ safe invariant, the constraint \eqref{eq:54} will impose that $\mathcal{H}(z_{d}(k+1|N))-\mathcal{H}(z_{d}(k||N))\geq-\gamma\mathcal{H}(z_{d}(k|N))$
	and as a result 
	with $0<\gamma\leq1$. We have $\mathcal{H}(z_{d}(k+1|N))>\mathcal{H}(z_{d}(k|N)$. From \eqref{eq:54}, if $z_{d}(N|0)\in\mathcal{Z}_{f}$
	such that $\mathcal{H}(z_{d}(N|0)\geq0$. One can design $K$ such that   $\mathcal{H}((A_{d}+B_{d}K)z_{d})>\mathcal(1-\gamma)\mathcal{H}(z_{d})$ with $0<\gamma\leq1$ proving the safety invariance of $\mathcal{Z}_{f}$.
\end{proof}

Assumption  \ref{claim:The-optimal-predicted}
provides the initial feasibility  conditions at $k=0$ where  Theorem \ref{thm:thm2}  guarantees the feasibility for the next time steps.
We can define the terminal constraint set as
$
{Z}_{f}(N_{c})=\{z_{d}:v_{\text{min}}\leq K(A_{d}+B_{d}K)^{i}z_{d}\leq v_{\text{max}},i=0,1,\ldots N_{c}\}\
$.
By choosing sufficiently large $N_{c}$, the allowable operating region of the MPC law will be increased satisfying all the constraints.
Note that the constraints in \eqref{eq:45-2} and \eqref{eq:47} are extra computational burden specially with large $N_{c}$ which is needed to maintain the feasibility (visit Theorem \ref{thm:thm2}). However, the constraints in \eqref{eq:45-2} and \eqref{eq:47} are linear constraints, where the QCQP presented in \eqref{eq:51}-\eqref{eq:47} can be solved efficiently by off-the-shelf solvers. Algorithm \ref{algo} summarizes the implementation steps of the proposed control scheme in Fig.\ref{fig:propsedscheme}. Given the known place and shape of the obstacle and for every time step the  starts by measuring the states and makes use of the coordinate transformation $\Phi(\overline{x})$ 
representing Lemma \ref{thm:thm1} to compute the equivalent z states.Then the QCQP problem mentioned in \eqref{eq:51}-\eqref{eq:47} will be solved to compute the optimal value of $v$  that will be given the DFL controller to compute the control inputs that will result in an obstacle-free path. The QCQP can be solved by Interior
Point OPTimizer (IPOPT) in MATLAB efficiently, and the computational complexity can be estimated by expected number of flops (floating point operations). The worst-case number of flops for the QCQP can be approximated by \cite{richter2011computational} as follows:
\begin{equation}    
	\#\text { flops }_{\mathrm{IP}}=i_{\mathrm{IP}}\left(2 / 3(\mathrm{Nm})^3+2(\mathrm{Nm})^2\right)
	\label{eq:flops}
\end{equation}
where $N$ is the prediction horizon and $m$ is the number of control inputs. The $i_{\mathrm{IP}}$ is the number of IP iterations which is expected to be $\mathcal{O}(\sqrt{Nm}\log(1/\epsilon))$ for an $\epsilon$-accurate solution \cite{SHEN2020108863}. 

In contrast, if the problem is formulated as an NMPC problem, the associated nonlinear problem is way more expensive to solve. Assuming that we solve it using the SQP method, omitting minor operations in function evaluations, gradient and Hessian updating, and line search in one major SQP iteration, a rough estimate of the computational complexity is
\begin{equation}
	\#\text { flops }_{\mathrm{SQP}} \approx i_{\mathrm{SQP}} \times\left(\# \text { flops }_{\mathrm{IP}}\right)
	\label{eq:SQP_flops}
\end{equation}
where $i_{\mathrm{SQP}}$ denotes the numbers of SQP iterations required for convergence or it can be specified by the maximum iteration number $i_{\mathrm{SQP}, \max }$. From (\ref{eq:SQP_flops})  readily, we have seen the advantage of the LMPC formulation. In the next section, we will show how the proposed scheme will lead to shorter prediction horizons $N$ further reducing effectively the computational complexity recalling \eqref{eq:flops}.

\begin{algorithm}[h]
	\caption{Safety Critical MPC based on DFL (SCMPCDFL)}
	
	\textbf{Inputs}:
	\begin{enumerate}
		\item $x(0)$, $x_{pos}$ and $y_{pos}$
	\end{enumerate}
	\textbf{Outputs}:
	\begin{enumerate}
		\item Control Signals $u_{1}$, $u_{2}$
	\end{enumerate}
	\textbf{For ever time step do}
	\begin{enumerate}
		\item[{\footnotesize{}1:}]  Measure the current state $\bar{x}_{k}$
		\item[{\footnotesize{}2:}]  Compute $z_{k}$ using the mapping in \eqref{eq:mobile37-1}
		\item[{\footnotesize{}3:}]  Solve the optimization problem in \eqref{eq:51}-\eqref{eq:47}
		and get $\{v^{*}(k+1|k),v^{*}(k+2|k),\ldots,Kx^{*}(k+N|k)\}$
		\item[{\footnotesize{}4:}]  Apply the first control input $v=v^{*}(k+1|k)$
		\item[{\footnotesize{}5:}]  Compute the control signals $u_{1}=v_{1}\cos(x_{3})+v_{2}\sin(x_{3})$
		and $u_{2}$ as $\begin{cases}
			u_{2}=-v_{1}\frac{\sin(x_{3})}{\zeta}+v_{2}\frac{\cos(x_{3})}{\zeta} & \zeta>\zeta_{threshold}\\
			u_{2}=0 & \zeta\leq\zeta_{threshold}
		\end{cases}.$
	\end{enumerate}
	\textbf{end }For
	\label{algo}
\end{algorithm}

\section{Results}\label{sec:Sec6_Results}
\begin{figure*}[!htb]
	\centering
	\begin{subfigure}[b]{0.475\textwidth}
		\centering
		\includegraphics[width=9cm,height=6cm]{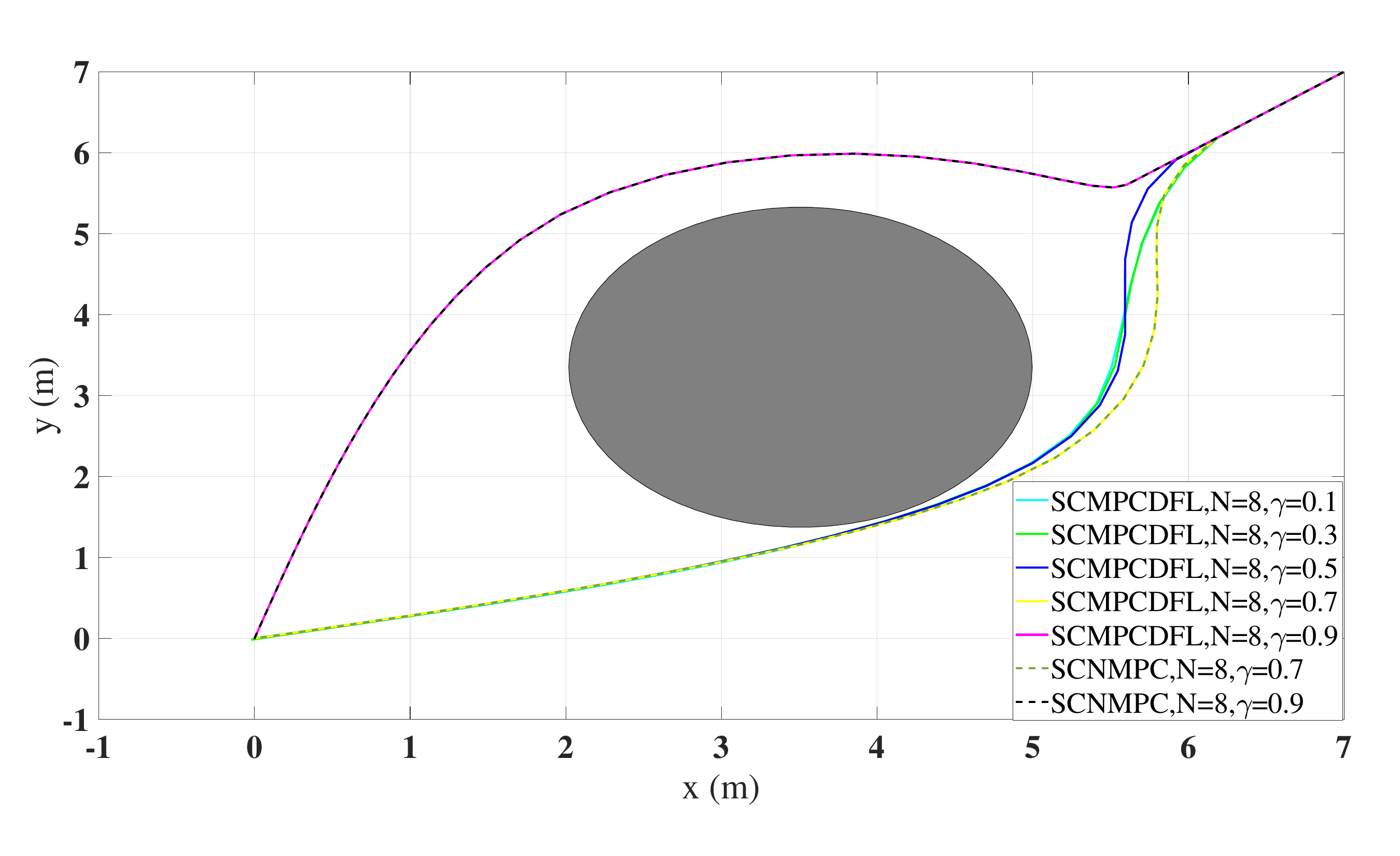}
		\caption[Network2]%
		{{}}    
		\label{}
	\end{subfigure}
	\hfill
	\begin{subfigure}[b]{0.475\textwidth}  
		\centering 
		\includegraphics[width=9cm,height=6.1cm]{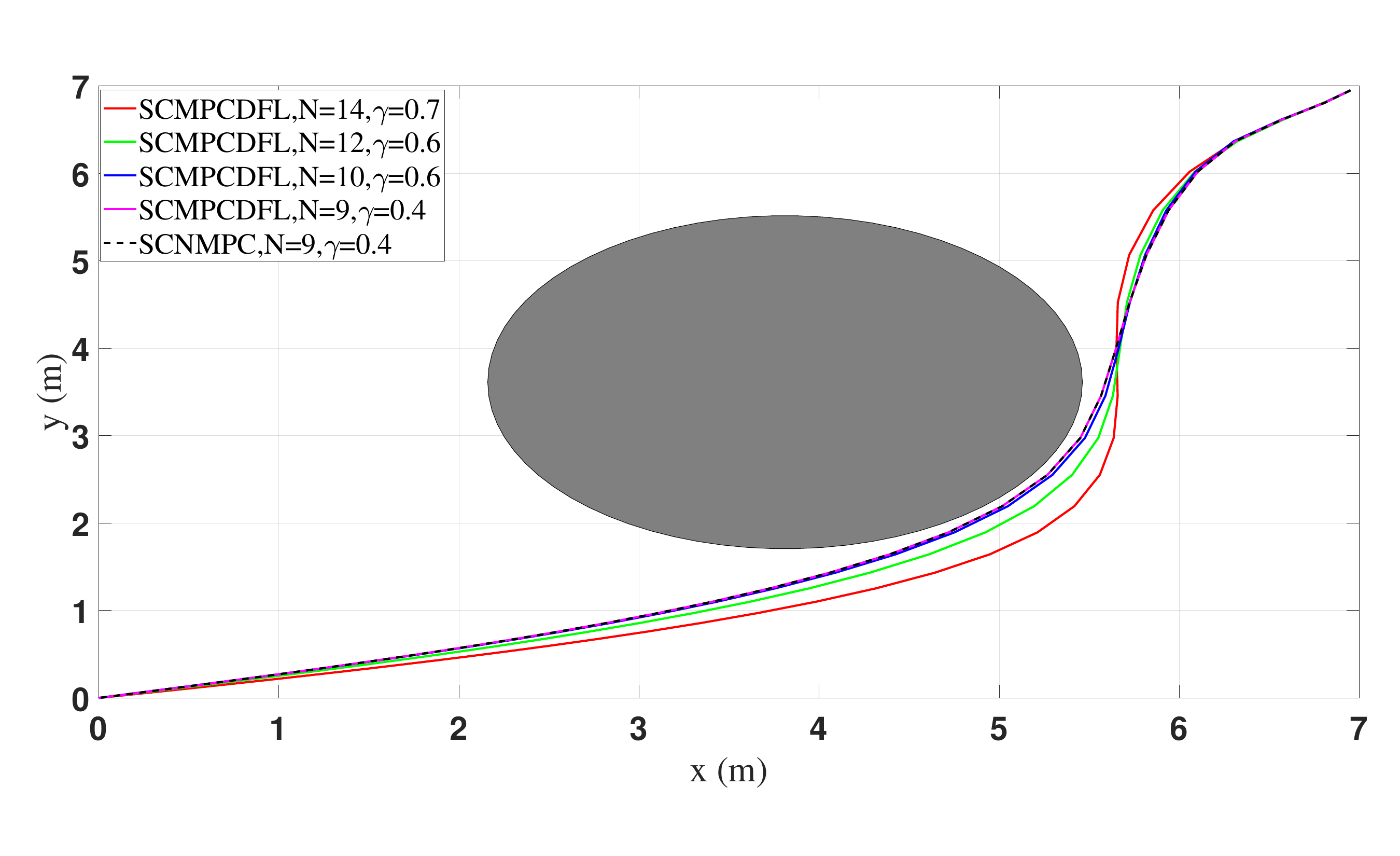}
		\caption[]%
		{{}}    
		\label{}
	\end{subfigure}
	\vskip\baselineskip
	\begin{subfigure}[b]{0.475\textwidth}   
		\centering 
		\includegraphics[width=9cm,height=6cm]{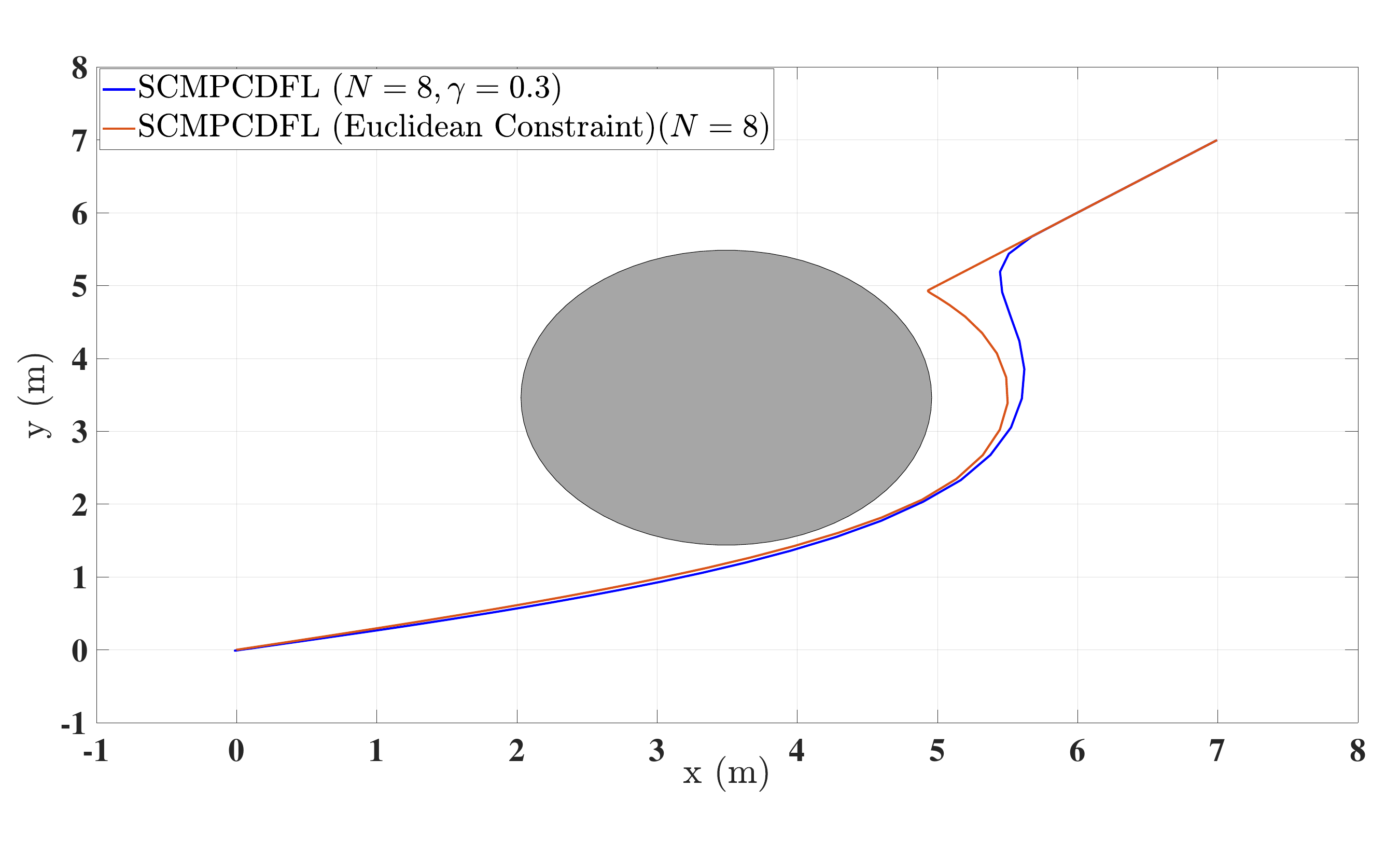}
		\caption[]%
		{{}}    
		\label{}
	\end{subfigure}
	\hfill
	\begin{subfigure}[b]{0.475\textwidth}   
		\centering 
		\includegraphics[width=9cm,height=6cm]{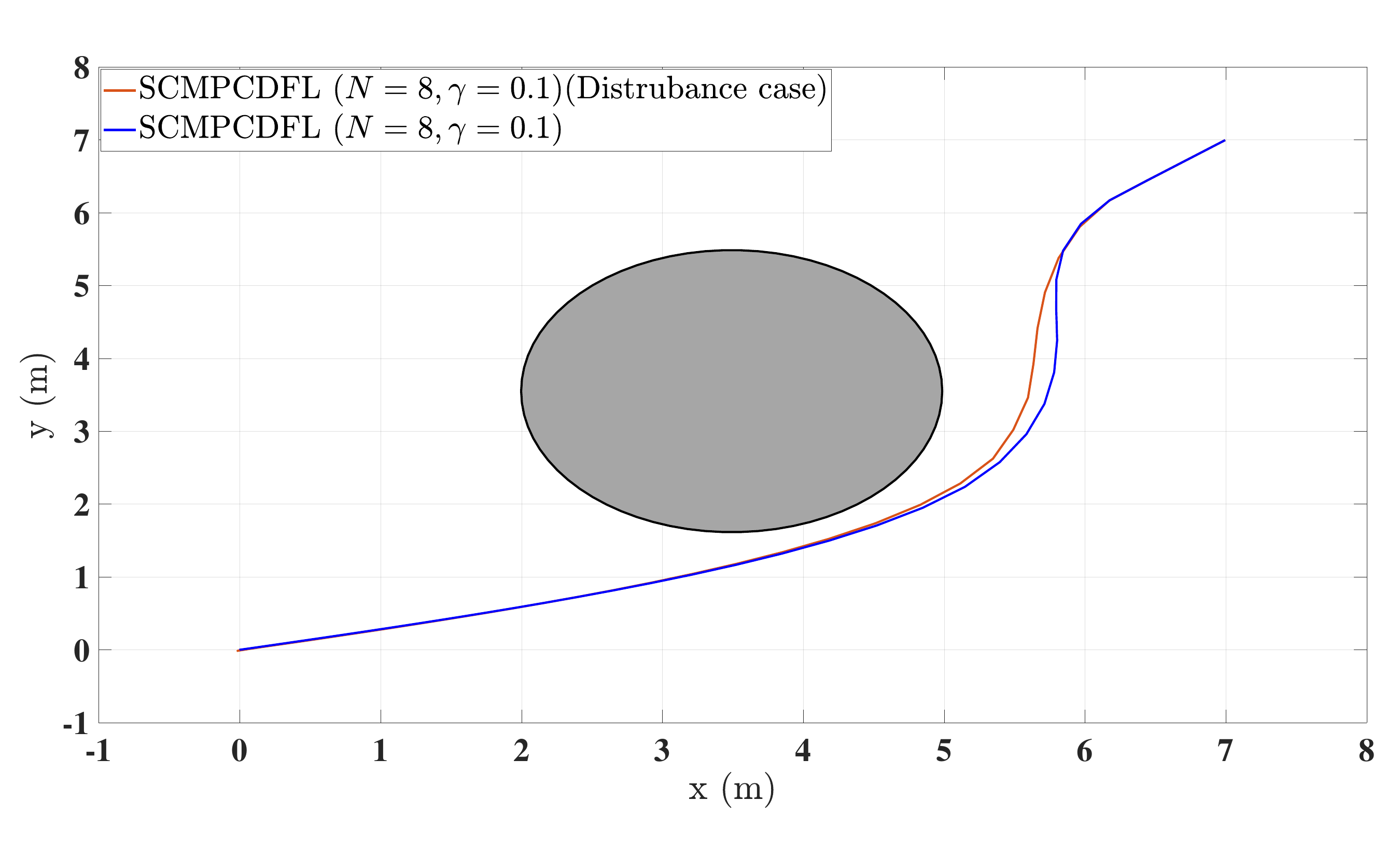}
		\caption[]%
		{{}}    
		\label{}
	\end{subfigure}
	\caption{Part (a) shows the Output performance of the SCMPCDFL and SCNMPC Control (presented in \eqref{eq:pr3}-\eqref{eq:pr7}) schemes with different values of  $\gamma=0.1,0.3,0.5,0.7,0.9,1$ and $N=8$. Part (b) illustrates the performance with different values of  $\gamma$ and $N$. Part (c) compares between the  performance of the proposed SCMPCDFL utilizing CBF against the benchmark of using Euclidean distance with the same prediction horizon $N=8$. Part (D) depicts a comparison between the output performance of the proposed scheme (demonstrated in Blue line) with and without output distance by a Gaussian noise of zero mean and 0.05 variance.}
	
	\label{fig:Result1}
\end{figure*}

\begin{figure}[!htb]
	\begin{centering}
		\includegraphics[width=0.5\textwidth]{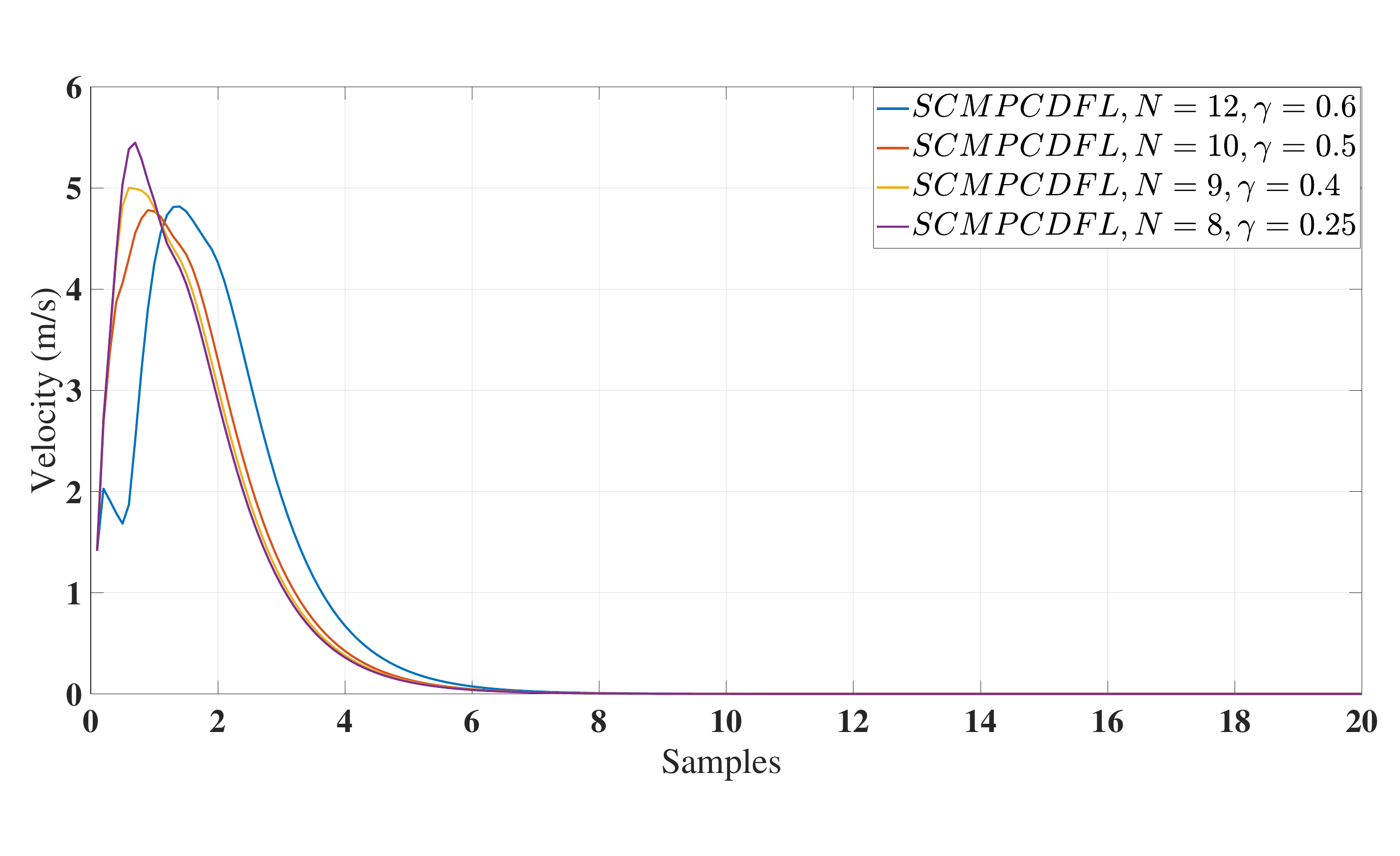}
		\par\end{centering}
	\caption{The linear velocity of the robot with different values of $N$
		and $\gamma.$ }
	\label{fig:vel}
\end{figure}
The computing source used to carry out simulations in this section
has the following specifications: Windows 11 (64-bit processor), CPU
(Intel(R) Core(TM) i7-10750H, 2.60GHz, 2592 MHz, 6 core, and 12 logic
processors), and 16 GB RAM. The QCQP problem in \eqref{eq:51}-\eqref{eq:47}
was solved using IPOPT method in MATLAB and the
continuous dynamics and DFL controller in \eqref{eq:nonlinearmodel-1}, \eqref{eq:1101}
was numerically integrated using the 4th Order Runge Kutta Method with
the same sampling time $T_{s}$. The proposed scheme is tested with
a sampling time $T_{s}=0.05$ sec and $\zeta_{threshold}=0.01$. The
robot starts from the initial position $(x=7,y=7)$ meters and
then goes to the origin against a circular obstacle placed in the
middle of the map with a radius of 1.5 meters.
Fig. \ref{fig:Result1}.(a)
shows the effect of $\gamma$ on the prediction horizon in the evolution
of the robot. The value of $\gamma$ plays a role in imposing stronger or weaker safety constraints. In the case of $\gamma=0.1$ (the green
line in Fig. \ref{fig:Result1}.(a)), the robot deviates far from the obstacle, since $\gamma=0.1$
imposes strong safety constraints with respect to the case of $\gamma=0.3,0.5,0.7,0.9$.
In the case of $\gamma=1$, the robot almost colloids with
the obstacle.
Fig. \ref{fig:Result1}.(b) illustrates the merit of using the CBF, Where a strong safety constraint using a small value of $\gamma$ unlocks the possibility of using a smaller prediction horizon. Even though the small prediction horizon $N=9$ in the red line in Fig. \ref{fig:Result1}.(c), it has the safest path compared to higher prediction horizons $N=10,12,14$ with the bigger value of $\gamma$. Fig. \ref{fig:Result1}.(c) demonstrates another benefit of using CBF compared to the benchmark of the Euclidean distance constraint that will be only activated when the robot is near the obstacle. The proposed scheme which is based on CBF (demonstrated in the blue line) starts to deviate earlier from the case of Euclidean distance constraint having the same prediction horizon $N=8$. Since the proposed scheme relies on the measured values of the states, the robustness of the scheme is tested in Fig. \ref{fig:Result1}.(d) against noisy feedback corrupted with Gaussian noise with zero mean and 0.05 variance. The case of corrupted feedback (demonstrated in the red line) can still avoid the obstacle with some deterioration compared to the nominal case. Fig. \ref{fig:vel} depicts the linear velocity of the robot with
different values of $N$ and $\gamma$.



The distance between the robot and the obstacle is shown in Fig. \ref{distancefigure} using different values of $\gamma$ and $N$ in both cases of using the Euclidean distance constraint and the proposed scheme.
The real-time implementation
of the QCQP problem needs to be solved within strict time constraints
with respect to the sampling time. \textcolor{black}{Violation of
	the time constraints could degrade the output performance
	and/or stability measures}. To verify that, Fig. \ref{fig:comptime}
shows the computational time for solving the QCQP problem, where the computational time doesn't exceed the sampling time $T_{s}=0.05$. 
Finally in Fig. \ref{fig:convergence}, the convergence of the states is shown, where the asymptotic convergence in Theorem \ref{thm:thm2} is verified. In case of disturbance(demonstrated in dashed lines in Fig. \ref{fig:convergence}) the states converge to a neighborhood of the origin.
		


\begin{figure}[!htb]
	\begin{centering}
		\includegraphics[width=9cm,height=6cm]{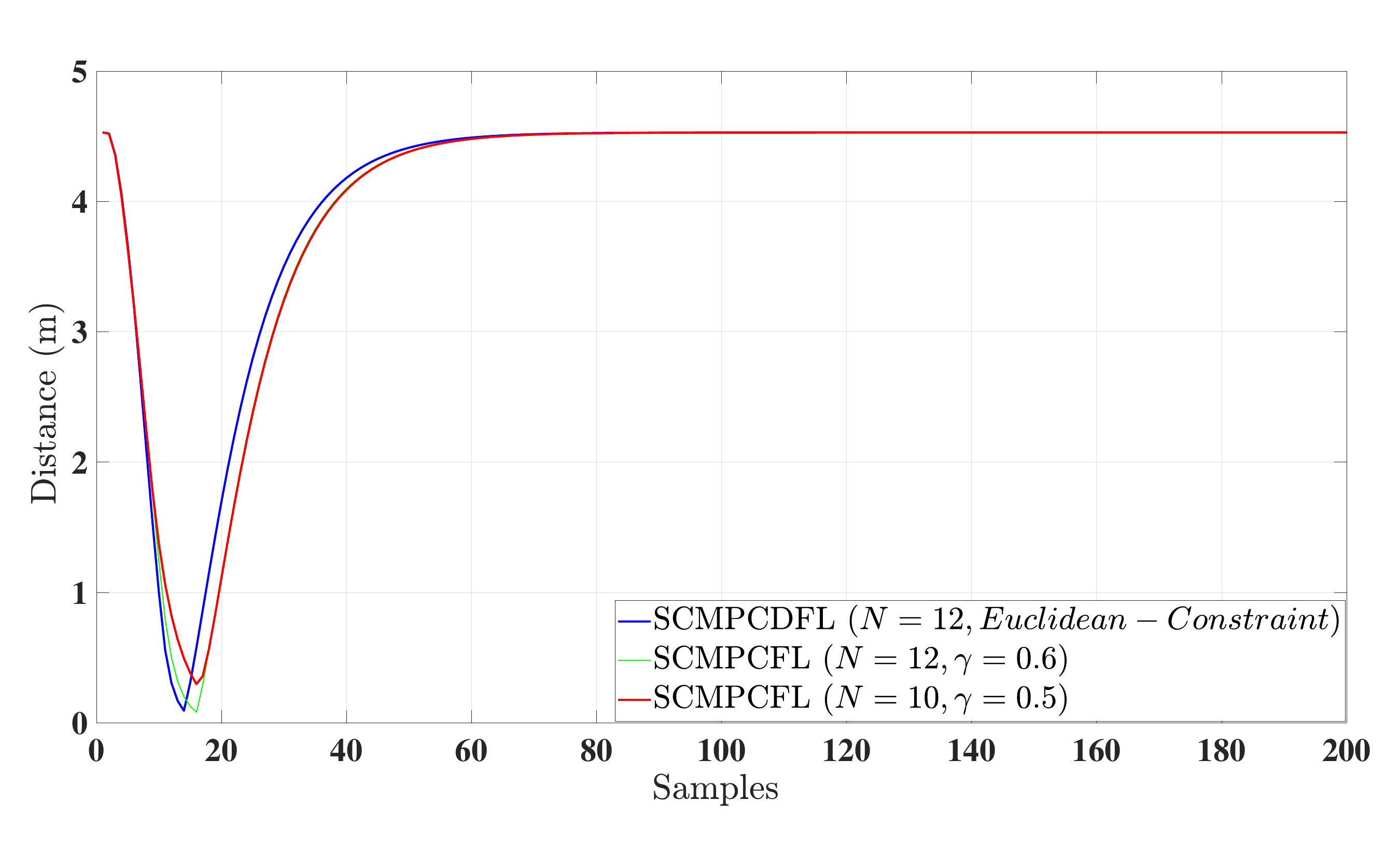}
		\par\end{centering}
	\caption{A comparison of the distance between the robot between the proposed scheme SCMPCDFL utilizing CBF against the benchmark of using Euclidean distance. }
	\label{distancefigure}
\end{figure}

\begin{figure}[!htb]
	\begin{centering}
		\includegraphics[width=9cm,height=6cm]{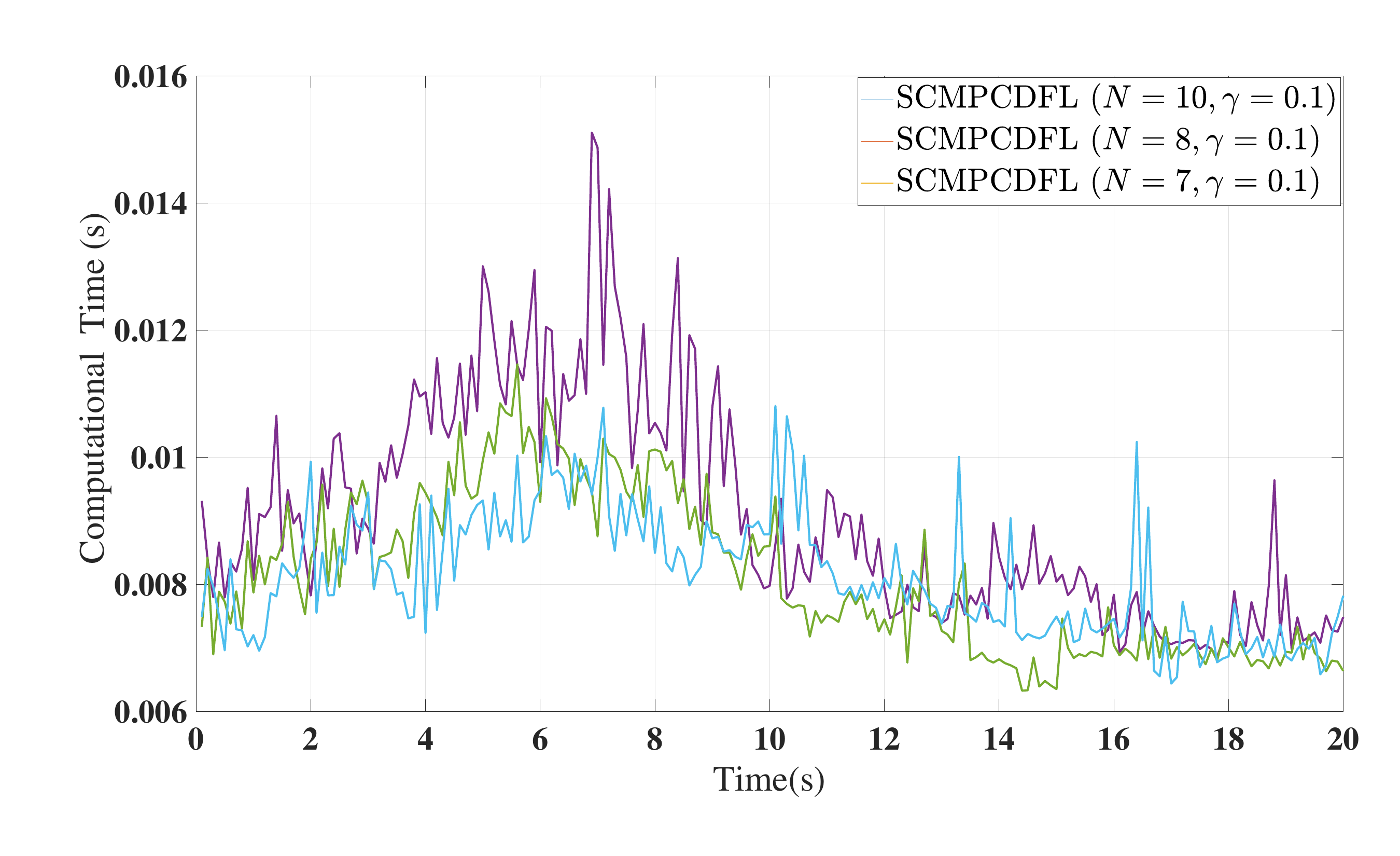}
		\par\end{centering}
	\caption{A comparison between computational time of SCMPCDFL and SCNMPC (presented in \eqref{eq:pr3}-\eqref{eq:pr7}) schemes time in seconds with different prediction horizon
		with $\gamma=0.1$.}
	\label{fig:comptime}
\end{figure}

\begin{figure}[!htb]
	\begin{centering}
		\includegraphics[width=9cm,height=6cm]{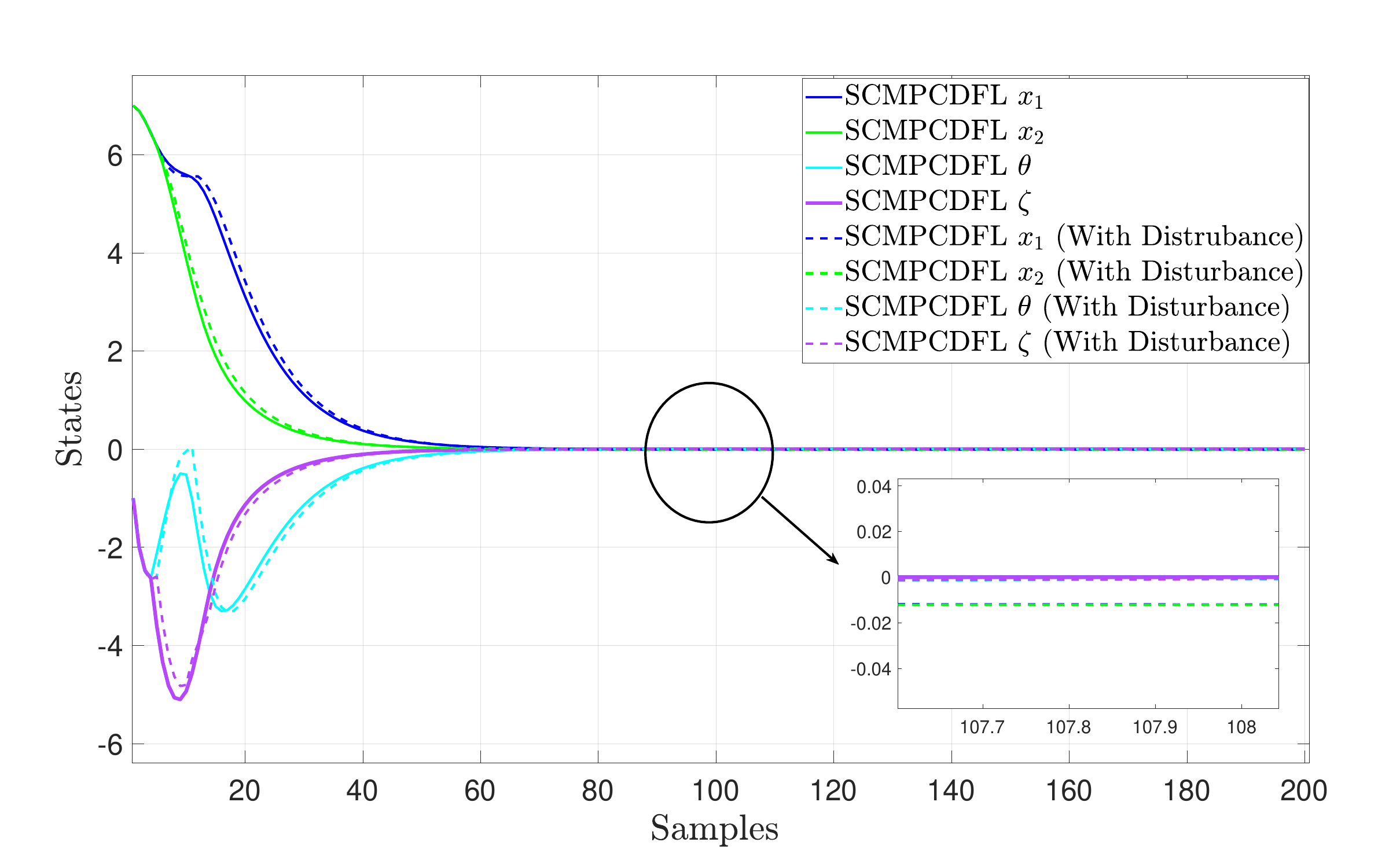}
		\par\end{centering}
	\caption{The convergence of extended state of the robot $\bar{x}$ to the origin with and without disturbance using $\gamma=0.1$ and $N=8$.}
	\label{fig:convergence}
\end{figure}

\section{Conclusion}\label{sec:Sec7_Conclusion}

A Safety-Critical Model Predictive Control Design (SCMPC) based on linear equivalence tailored to the differential mobile robots with two wheels is proposed in this paper. A cascaded scheme of Model Predictive Control (MPC) controller along with a Dynamic Feedback Linearization (DFL) controller is presented to achieve safe navigation of the unicycle ground robot. The proposed scheme shows a strong theoretical guarantee for closed-loop stability and recursive feasibility based on Linear MPC (LMPC) analysis. Numerical simulations show successful obstacle-free navigation paths where fast and easy implementation can be achieved with a theoretical guarantee of convergence. The selected $\gamma$ is crucial for imposing stronger or weaker safety constraints. The utilization of the Control Barrier Functions (CBF) within the safety constraints improves the obstacle avoidance maneuvers with respect to the constraints based on Euclidean distances. Numerical results illustrated the robustness of the proposal. Future work would be investigating the recursive feasibility and performing experimental experiments.


\section{Acknowledgments}\label{sec11}

This work was supported in part by the National Sciences and Engineering Research Council of Canada (NSERC), under the grants RGPIN-2022-04937, RGPIN-2022-04940, DGECR-2022-00103 and DGECR-2022-00106.

\balance
\bibliographystyle{IEEEtran}
\bibliography{Bib_MPC}
		
	\end{document}